\newcommand{\revise}[1]{{{\color{black} #1}}}
\newcommand{\rerevise}[1]{{{\color{black} #1}}}
\newcommand{\uveci}{{\bm{1}}}
\theoremstyle{plain}
\newtheorem{theorem}{Theorem}[section]
\newtheorem{lemma}[theorem]{Lemma}
\newtheorem{corollary}[theorem]{Corollary}
\theoremstyle{definition}
\theoremstyle{remark}
\newtheorem{remark}[theorem]{Remark}
\DeclareMathOperator{\argmin}{argmin}
\DeclareMathOperator{\argmax}{argmax}
\DeclareMathOperator{\sign}{sign}
\DeclareMathOperator{\spar}{sp}
\DeclareMathOperator{\st}{st}
\theoremstyle{definition}
\let\norm\undefined % <-- "Undefine" \norm
\DeclarePairedDelimiter\norm{\lVert}{\rVert}
\DeclarePairedDelimiter\abs{\lvert}{\rvert} % Declare the abs value symbol.
\title{Explicit Group Sparse Projection with Applications to \\ 
Deep Learning and NMF}
\author{\name Riyasat Ohib \email riyasat.ohib@gatech.edu \\
      \addr TReNDS Center, Georgia Institute of Technology
      \AuthorAND
      \name Nicolas Gillis \email  nicolas.gillis@umons.ac.be \\
      \addr University of Mons 
      \AuthorAND
      \name Niccol\`{o} Dalmasso \email niccolo.dalmasso@jpmchase.com\\
      \addr J.P. Morgan AI Research
      \AuthorAND
      \name Sameena Shah \email sameena.shah@jpmchase.com\\
      \addr J.P. Morgan AI Research
      \AuthorAND
      \name Vamsi K. Potluru \email vamsi.k.potluru@jpmchase.com\\
      \addr J.P. Morgan AI Research
      \AuthorAND
      \name Sergey Plis \email s.m.plis@gmail.com\\
      \addr TReNDS Center
      }
\begin{document}

\maketitle
\begin{abstract}
We design a new sparse projection method for a set of vectors that guarantees a desired average sparsity level measured leveraging the popular Hoyer measure (an affine function of the ratio of the $\ell_1$ and $\ell_2$ norms). 
Existing approaches either project each vector individually or require the use of a regularization parameter which implicitly maps to the average $\ell_0$-measure of sparsity. Instead, in our approach we set the \revise{Hoyer} sparsity level for the whole set explicitly and simultaneously project a group of vectors with the \revise{Hoyer} sparsity level of each vector tuned automatically.
We show that the computational complexity of our projection operator is linear in the size of the problem.
Additionally, we propose a generalization of this projection by replacing the $\ell_1$ norm by its weighted version.
We showcase the efficacy of our approach in both supervised and unsupervised learning tasks on image datasets including CIFAR10 and ImageNet. In deep neural network pruning, the sparse models produced by our method on ResNet50 have significantly higher accuracies at corresponding sparsity values compared to existing competitors. In nonnegative matrix factorization, our approach yields competitive reconstruction errors against state-of-the-art algorithms.
\end{abstract}

% --- START KEYWORDS ---
\vspace{1em} % Adds a little vertical space between abstract and keywords
\noindent % Prevents indentation
\textbf{Keywords:} Sparse Neural Networks, Model Pruning, Sparse Projection, NMF, Hoyer Sparsity.
% --- END KEYWORDS ---

%\input{00_todo_icml22}

\section{Introduction}

Sparsity is a crucial property in signal processing and learning representations, exemplified by breakthroughs in compressed 
sensing~\citep{donoho2006compressed,candes2006robust}, low-rank matrix approximations~\citep{dAE07} and sparse dictionary learning~\citep{aharon2006k,hoyer2004non}.  A natural advantage of sparseness is that it diminishes the effects of random noise since it suppresses arbitrary combinations of measured signals \citep{donoho1995noising, hyvarinen1999sparse, elad2006simple}. The $l_0$-norm is a natural
measure of sparsity but directly optimizing for it is typically NP-hard. In practice, $\ell_1$-norm is typically used as a proxy for obtaining sparse solutions. This is similar to the LASSO problem~\citep{tibshirani1996regression} in the regression setting where one constrains the number of non-zeros in the solution \rerevise{which} can be accomplished explicitly as a constraint or implicitly through $\ell_1$-norm regularization. However, this necessitates the search over the regularization parameter which corresponds to the solution with the user-defined sparsity.

In this paper, we design a new sparse projection method for a set of feature vectors $\{ c_i \in \mathbb{R}^{n_i} \}_{i=1}^r$ to achieve a desired average sparsity level that is measured using the \revise{sparsity measure introduced by~\cite{hoyer2004non}. 
For $x \neq 0$ \revise{and $n>1$}, the Hoyer sparsity of $x$ is defined as follows 
\begin{equation} \label{hoyerSPSeq} 
\spar(x) = \frac{\sqrt{n} - \frac{\norm{x}_1}{\norm{x}_2}}{\sqrt{n} - 1} \; \in \; [0,1]. 
\end{equation} 
We have that $\spar(x) = 0 \iff \norm{x}_1 = \sqrt{n} \norm{x}_2 \iff |x(\rerevise{j})| = b$ for all $\rerevise{j}$ and for some constant $b$, 
while $\spar(x) = 1 \iff \norm{x}_0 = 1$. We refer to Section~\ref{hoyer-sps} for more detail on this measure. 
In summary, given a set of vectors $\{ c_i \in \mathbb{R}^{n_i} \}_{i=1}^r$ and a desired average sparsity level $s \in [0,1]$, 
we will compute a set of vectors $\{ x_i \in \mathbb{R}^{n_i} \}_{i=1}^r$ that are closest to $\{ c_i \in \mathbb{R}^{n_i} \}_{i=1}^r$ for each $i$ (see Section~\ref{algogsp} for the details), while the average Hoyer sparsity of the vectors $x_i$'s is larger than $s$, that is, $\frac{1}{r} \sum_{i=1}^r \spar(x\rerevise{_i}) \geq s$. }  

This projection is inspired by the work of~\citet{thom2015efficient}  in which each feature vector $c_i$ is independently projected, and used for sparse dictionary learning.  
Although a range of approaches exist to induce sparsity, this particular measure has been shown to enjoy many attractive properties~\citep{hurley2009comparing} while also being very amenable for optimization.
The key difference with our projection from previous works by~\cite{potluru2013block,thom2015efficient} is that the feature vectors achieve \emph{an average target sparsity level}: some may end up dense, while others extremely sparse based on the problem. Therefore, our approach has three main advantages: \revise{
\begin{enumerate}

    \item Only one sparsity parameter has to be chosen, namely $s \in [0,1]$. 
    
    \item The sparsity levels of the projected vectors are automatically tuned to achieve the desired average sparsity\rerevise{; hence,} allowing these vectors to have different sparsity levels.  
    
    \item Our projection has more degrees of freedom and consequently will generate sparse feature vectors that are closer to the original ones. 
\end{enumerate} 
Our novel projection operator will rely on duality and Newton's method to compute the unique solution under mild assumptions. 
This new approach to project a set of vectors could be used in numerous applications where more than one sparse vector has to be learned, e.g., in 
dictionary learning~\citep{thom2015efficient} and 
sparse low-rank matrix approximations~\citep{dAE07}. 
In this paper, we will explore our {novel} projection operator in the settings of pruning deep neural networks and sparse \rerevise{non-negative matrix factorization} (NMF).
}

% ================================================================================================
% ================================ BACKGROUND AND RELATED WORKS ==================================
% ================================================================================================
\subsection{Related Work} \label{sec:relwork}

\paragraph{\revise{\rerevise{Projection} onto the $\ell_1$ ball, and onto the intersection of $\ell_1, \ell_2$ balls}} 
 $\ell_1$-ball projections have a rich history in the literature and in particular have been considered \rerevise{earlier} by~\cite{gafni1984two}.  
Recent versions are optimal with run times linear in size of the input~\citep{duchi2008efficient,condat2016fast}. Projections onto the intersection of $\ell_1,\ell_2$-ball constraints was introduced by \citet{hoyer2004non} and subsequently addressed in a series of works by~\citet{yu2012efficient,potluru2013block,thom2015efficient,liu2019unified}, resulting in essentially the same optimal run times as shown in the $\ell_1$ settings.
Note that none of the approaches above can directly handle the group sparsity problem that arises in Sparse NMF and neural network models.

\paragraph{Sparse NMF}

In nonnegative matrix factorization, an input data matrix $\mathrm{Y} \in \mathbb{R}^{m \times n}$ is approximated by a low-rank matrix $\tilde{\mathrm{Y}} =  \mathrm{X} \mathrm{H}$ where $\mathrm{X} \in \mathbb{R}^{m \times r}_+$ and $ \mathrm{H} \in \mathbb{R}^{r \times n}_+$, \revise{and $r$ is the factorization rank}. 
The most popular formulation of NMF uses the Frobenius norm to evaluate the quality of the solution as follows:
\begin{equation}
    \begin{gathered} 
    \min_{X \in \mathbb{R}^{m \times r}, H \in \mathbb{R}^{r \times n}}
    \;  
    \norm{ \mathrm{Y} - \mathrm{X} \mathrm{H}}_F^2 \\
    \quad \text{ such that } \mathrm{X} \geq 0 \text{ and } \mathrm{H} \geq 0. 
    \end{gathered}\label{nmfFro}
\end{equation}
 Many algorithms have been proposed to tackle this problem such as by \citet{kim2007sparse} and \citet{potluru2013block}. 
 Most of them use an alternating strategy optimizing $\mathrm{X}$ for $\mathrm{H}$ fixed and then $\mathrm{H}$ for $\mathrm{X}$ fixed, since the corresponding subproblems are convex; see Section~\ref{subNMF} for more detail.  
 %Two state-of-the-art techniques to attack these subproblems are (a)~block coordinate descent method, which optimizes alternatively the columns of $\mathrm{X}$ (respectively rows of $\mathrm{H}$, \citealt{cichocki2007hierarchical}) and (b)~fast gradient method (FGM) based on Nesterov acceleration scheme for smooth convex optimization \cite{nesterov2013introductory}, to which we refer to as NeNMF (for Nesterov NMF, \citealt{guan2012nenmf}). Latest variant of (a) is the accelerated hierarchical alternating least squares (A-HALS, \citealt{gillis2012accelerated}), which outperforms existing algorithms such as the multiplicative updates~\citep{lee1999learning} and active-set based methods~\citep{kim2008nonnegative}. 
 
 In practice, it is particularly useful to have sparse $\mathrm{X}$ and/or $\mathrm{H}$ to take prior information into account, \revise{leading to more robust, identifiable and more interpretable} decomposition~\citep{hoyer2004non}. \revise{We will discuss in detail sparse NMF formulations in Section~\ref{subNMF}.} 
 %It is therefore useful to design projections onto the set of sparse matrices which can be used in, e.g.,  an optimization algorithm.

%We evaluated our projection in the sparse nonnegative matrix factorization (NMF) problems~\cite{hoyer2004non} competing with stage of the art methods in image dataset and outperforming them in synthetic dataset. These projections can also be used for dictionary learning as shown in~\cite{thom2015efficient}.

% ====================================================================================================================
% ============================================ NETWORK PRUNING =======================================================
% ====================================================================================================================

\paragraph{Neural Network Pruning}
%Deep neural networks have been extremely successful for function approximation in a range of complex tasks. However, this performance comes at a significant computational cost and excessive memory requirements. 
It has been known since the 1980s that a significant number of parameters can be eliminated from neural networks without appreciable loss in accuracy \citep{janowsky1989pruning, lecun1990optimal, reed1993pruning}. It is indeed an attractive proposition to prune such large networks for real-time applications especially, on edge devices with resource constraints. Pruning large networks could substantially reduce the  computational demands of inference when used with appropriate libraries \citep{elsen2020fast} or hardwares designed to exploit sparsity \citep{nvidia_ampere, cerebras}.
In recent times, the Lottery Ticket Hypothesis \citep{frankle2018lottery} was proposed that details the presence of sub-networks within a larger network, which are capable of training to full accuracy in isolation. This resulted in a renewed interest in sparse deep learning and model pruning \citep{renda2020comparing, chen2020lottery, chen2021lottery} and more recently in the area of sparse reinforcement learning \citep{arnob2021single, sokar2021dynamic}.
There are a range of techniques in the literature to prune deep neural networks and find sub-networks at various stages of training: techniques that prune before training \citep{lee2018snip, wang2020picking}, during training \citep{zhu2017prune, ma2019transformed} and after training \citep{han2015learning}. The most common among these techniques is to prune the network after training using some sort of predefined criterion that captures the significance of the parameters of the network to the objective function. A range of classical works on pruning used the second derivative information of the loss function \citep{lecun1990optimal, hassibi1993optimal}. Perhaps the most intuitive of these approaches is magnitude based pruning, where following training, a subset of the parameters below some threshold is pruned and the rest of the parameters are retrained \citep{han2015learning, guo2016dynamic}, and regularization based methods \citep{yang2019deephoyer, ma2019transformed, louizos2017learning, yun2019trimming} which induces sparsity in the network during the optimization process.

% lack of/ need for
However, we still lack a way to introduce sparsity in the layers of the network which is both controllable and interpretable. For example, in the case of regularizer imposed sparsity in neural networks \citep{yang2019deephoyer}, there is no way to relate the regularization weight with the actual sparsity of the result. Moreover, in practice we have to tune the regularizer weight differently for each task and architecture. Hence a form of a grid search for that parameter is unavoidable, which is costly both in terms of time and resources.

% ==========================================================================================================
% ============================================ CONTRIBUTIONS ===============================================
% ==========================================================================================================
\subsection{Contributions and Outline}
Our contributions can be summarized as follows:
%\todo{Rework into two main contributions with sub points}
%\todo{Style comment: make it fit in this page}
\begin{enumerate}
    \item We define a novel grouped sparse projection (GSP) with \rerevise{a} \revise{single} sparsity parameter (Section~\ref{gsp}). 
    \begin{itemize}
        \item We provide an efficient algorithm (linear in the size of the problem) to compute this projection, based on \rerevise{the} Newton's method (Section~\ref{algogsp}). \vspace{-0.05cm}
        \item We extend our approach to perform weighted grouped sparse projection using the weighted $\ell_1$ norm (Section~\ref{WGSP} and Appendix~\ref{AppWGSPsparsity}).
    \end{itemize} 
    \item We evaluate GSP on sparse NMF and pruning deep neural network tasks (Section~\ref{experiments}).  %\todo{Bullet point 2. and sub-bullet points needs to be updated with the current experiment results} \vspace{-0.1cm}
    \begin{itemize}
        \item For NMF, GSP competes with state-of-the-art methods on an image dataset and outperforms them on a synthetic dataset (Section~\ref{subNMF}). 
        
        \item In pruning deep neural networks, GSP achieves higher accuracies at corresponding sparsity values compared to competing methods on the CIFAR10 dataset. On the Imagenet task, it outperforms a range of pruning methods in terms of sparsity versus accuracy trade-off (Section~\ref{subDNNprune}). 
        \item GSP can also recover competitive accuracy with a single projection of large pre-trained models followed by finetuning, altogether skipping the training with regularization phase. (Section \ref{singleshot}).
    \end{itemize}
\end{enumerate}

% \begin{itemize*}
% \item We define a novel explicit grouped sparse projection (GSP), with a controllable and interpretable sparsity parameter and derive several equivalent reformulations (Section~\ref{gsp}). We also provide an extension to perform weighted grouped sparse projection by introducing the notion of weighted $\ell_1$ norm (Section~\ref{WGSP}).
% \item We provide an efficient algorithm (linear in the size of the problem) to compute this projection, based on the Newton's method (Section~\ref{algogsp}).
% \item We evaluate our projection in sparse nonnegative matrix factorization (NMF) problems~\citep{hoyer2004non} competing with state of the art methods on an image dataset and outperforming them on a synthetic dataset. (Section-\ref{subNMF})
% \item We utilize GSP to prune deep neural networks in a controllable way and compare to similar contemporary techniques in the CIFAR10 and the large-scale Imagenet dataset using modern architechtures. (Section-\ref{subDNNprune})
% \item We also show that, with a single projection of large pretrained models, we can prune them and recover accuracy competitive with the baseline (Section-\ref{singleshot}).
% \end{itemize*}

Please check the appendix for detailed proofs, implementation details, and additional experiments.

%  Unlike these largely impractical approaches, we project the coefficients of each layer onto a sparse set to obtain sparse layers. Consider a fully connected dense layer, we would like to replace it with a sparse layer satisfying certain
% structural sparsity constraints and, as we will show, we can utilize our projection to train such a network. For instance, sparse low-rank layers were learned to train small deep networks for deployment on edge devices~\cite{kusupati2018fastgrnn}.

% ========================================================================================================
%% ========================================== NOTATIONS =================================================
% ========================================================================================================
\paragraph{Notation} We denote 
$\mathbb{R}^n$ the set of $n$-dimensional vectors in $\mathbb{R}$,
$\mathbb{R}^n_+ = \mathbb{R}^n \cap \{ x \ | \ x \geq 0\}$ where $x \geq 0$ means that the vector $x$ is component-wise nonnegative, $\mathbb{R}^n_0 = \mathbb{R}^n \backslash \{0\}$ and 
$\mathbb{R}^n_{0,+} = \mathbb{R}^n_+ \backslash \{ 0 \}$ where $0$ is the vector of zeros of appropriate dimension. 
For $x \in \mathbb{R}^n$, 
we denote 
$\sign(x)$ the vector of signs of the entries of $x$, 
$|x|$ the component-wise absolute value of the vector $x$, 
$[x]_+=\max(0,x)$ the projection onto the nonnegative orthant, 
$x\revise{(i)}$ the $i$th entry of $x$, 
$\norm{x}_0$ the number of nonzero entries of $x$, that is, the $\ell_0$ norm,
$\norm{x}_1 = \sum_{i=1}^n |x\revise{(i)}|$ the $\ell_1$ norm of $x$, and 
$\norm{x}_2 = \sqrt{\sum_{i=1}^n x\revise{(i)}^2}$ the $\ell_2$ norm of $x$. 
We also denote  
$\circ$ the component-wise product between two vectors, that is, 
$z = x \circ y \iff z\revise{(i)} = 
x\revise{(i)} y\revise{(i)}$ $\forall i$, 
 \revise{$\uveci$} the vector of all ones of appropriate dimension, \revise{and $\llbracket 1,r  \rrbracket = \{1,2,\dots,r\}$.}

\section{Background on Projection \revise{with the Hoyer Sparsity Measure}}  \label{gsp}

In this section, we define the problem of sparse projection and formulate the sparse projection task for a single vector.

% ========================================================================================================
% ========================= Problem Definition: Hoyer Sparsity ===========================================
% ========================================================================================================
\subsection{\revise{Hoyer sparsity measure}} \label{hoyer-sps}

Given a vector $x \in \mathbb{R}^n$, a meaningful way to measure its sparsity is to consider the Hoyer sparsity defined in~\eqref{hoyerSPSeq}. 
%measure~\cite{hoyer2004non}: 
%For $x \neq 0$ \revise{and $n>1$}, we define the sparsity of $x$ as  
%\begin{equation} \label{} 
%\spar(x) = \frac{\sqrt{n} - %\frac{\norm{x}_1}{\norm{x}_2}}{\sqrt{n} - 1} \; \in \; [0,1]. 
%\end{equation} 
%which we will refer in this paper to as the sparsity of $x$. 
\revise{A main advantage of} $\spar(x)$ compared to $\norm{x}_0$ is that $\spar(x)$ is smooth \revise{except on a set of measure zero, namely when $x(i) = 0$ for some $i$, since $\|x\|_1$ is not smooth at these points. However, it is continuous everywhere except at the origin, $x=0$, where it is not defined. Moreover, when considering only the nonnegative orthant, that is, $x \in \mathbb{R}^n_+$, 
then $\spar(x)$ is smooth everywhere except at the origin, since $\|x\|_1 = {\bf{1}}^T x$ for $x \geq 0$. \rerevise{As we will see in Sections~\ref{sec:single_vector} and~\ref{algogsp}, 
see in particular~\eqref{r1GSPv2} and~\eqref{GSP}, 
one can restrict the search space to the nonnegative orthant which makes $\spar(x)$ even more convenient to work with.}  
These are crucial properties that allows one to project efficiently onto the set of vectors of a given Hoyer sparsity~\citep{thom2015efficient}.}  
% For example, with this measure, the vector $[1, 10^{-6}, 10^{-6}]$ is sparser than $[1, 1, 0]$, which makes sense numerically as $[1, 10^{-6}, 10^{-6}]$ is very close to the 1-sparse vector $[1, 0, 0]$. 
Note that $\spar(x)$ is invariant to scaling, that is, $\spar(x) = \spar(\alpha x)$ for any $\alpha \neq 0$.   
% Note that for any two vectors $w$ and $z$, \spar(w) \leq \spar(z) \iff \frac{||w||_1}{||w||_2}   \geq \frac{||z||_1}{||z||_2}$. 
%Note that $\spar(x)$ , nor for $n=1$. 
%More details about the \textit{Hoyer-sparsity} measure are given in Appendix~\ref{apphoyer-sps}.
% Appendix-\ref{apphoyer-sps}.

Another useful property of $\spar(x)$ is its nonincreasingness under the soft thresholding 
operator: 
Given a vector $x$ and a parameter $\lambda \geq 0$, 
the soft-thresholding operator  is defined as  
\begin{equation*}
\st(x,\lambda) 
= \sign(x) \circ [|x| - \lambda \revise{\uveci}]_+. 
\end{equation*}
\revise{This property will be particularly useful later when deriving our novel projection of a set of vectors.} 
%where $\circ$ is the component-wise multiplication, $e$ is the vector of all ones of appropriate dimension, and $[.]_+$ is the projection onto the nonnegative orthant, that is, $\max(0,.)$. 
% ============================== LEMMA STARTS ============================== 
% We have the following result (see Lemma~\ref{lemmaWGSP} in the appendix for a proof of a more general result):
% \begin{lemma}[Lemma~3, \cite{thom2015efficient}] \label{lem1} 
% Let $x \in \mathbb{R}^n_{0,+}$, and let $\tilde{\lambda}$ be the second largest entry of the vector $|x|$. 
% For $\lambda \leq \tilde{\lambda}$, $\spar\left( \st(x,\lambda) \right)$ is strictly decreasing. 
% \end{lemma}
 Note that for $\lambda$ between the largest and second largest entry of $|x|$, $\st(x,\lambda)$ is 1-sparse with 
$\spar\left( \st(x,\lambda) \right) = 1$, hence $\spar\left( \st(x,\lambda) \right)$ is constant. 
Interestingly, for $x = b \, \revise{\uveci}$ for some constant $b$, it is not possible to sparsify $x$ (because we cannot differentiate between its entries) and $\st(x,\lambda)$ is constant for all $\lambda  < b$. 
%We refer the reader to~\citet{hurley2009comparing} and~\citet{thom2015efficient} for a more detailed discussion on the sparsity measure~\eqref{hoyerSPSeq}. 
Note that while the $\ell_0$ norm can also be used directly as \revise{a} sparsity measure \citep{bolte2014proximall0, pock2016ipalm},  it does not enjoy \revise{some of} the nice properties of the Hoyer-sparsity measure; 
\revise{see Appendix~\ref{apphoyer-sps}}, \citet{hurley2009comparing} and~\citet{thom2015efficient} for more detailed discussions.

%% =============================================================================
%% ================================ FORMULATION ================================
%% =============================================================================

\subsection{Single Vector Sparse Projection}\label{sec:single_vector}

Let us first present the sparse projection problem for a single vector, along with a reformulation that will be useful to project a set of vectors. These derivations are similar to that of \citet{hoyer2004non, potluru2013block, thom2015efficient}. 
Given $c \in \mathbb{R}^{n}_0$ and a sparsity level $s \in [0,1]$, the sparse projection problem can \revise{be} formulated as follows
\begin{equation}  \label{r1GSP}
    \min_{z \in \mathbb{R}^{n}}  \; \norm{c - z}_2  
    \quad 
    \text{ such that }  
    \quad 
     \spar \left( z \right) \geq s.    
\end{equation}
% We note that the objective function can be rewritten as 
% \begin{equation*}
%     \norm{x - \tilde{x}}_2^2  = \norm{x}_2^2 - 2 x^T  \tilde{x} + \norm{\tilde{x}}_2^2. 
% \end{equation*}
Let us use the change of variable $z = \alpha x$, with $\alpha = \norm{z}_2 \geq 0$ and $\norm{x}_2 = 1$.
% Let us introduce the auxiliary variables $\alpha = \norm{\tilde{x}}_2 \geq 0$, 
% and let us use the change of variables $\tilde{x} = \alpha \bar{x}$ with $\norm{\bar{x}}_2 = 1$. 
%r_change
Note that $\spar(z) = \spar(\alpha x) = \spar(x)$ since $\spar(.)$ is invariant to scaling. 
Hence $\alpha$ does not appear in the sparsity constraints. Moreover, $\alpha$ can be optimized easily. 
By expanding the $\ell_2$ norm we have:  
\[
    \alpha^* = \argmin_{\alpha \geq 0} \norm{c - \alpha x}_2 
    = \max(0, x^T c), 
\]

since $\norm{x}_2 = 1$. For $\alpha^* > 0$, we have 
\[
    \norm{c - \alpha^* x}_2^2 
    = \norm{c}_2^2 - 2 \alpha^* x^T c + (\alpha^*)^2 
    =  \norm{c}_2^2 - (x^T c)^2. 
\]
We notice that the sign of the entries of $x$ can be chosen freely since the constraints are not influenced by flipping the sign of entries of $x$. This implies that, at optimality,
\begin{itemize}[itemsep=0pt, topsep=0pt]
    \item the entries of $x$ will have the same sign as the entries of $c$, and  
    \item $\alpha^* > 0$ since $c \neq 0$ and $x \neq 0$.   
\end{itemize}

% ==================================== EQUATION 3 ===========================================
Therefore, \eqref{r1GSP} can be reformulated as
\begin{equation}
    \begin{gathered} 
        \max_{x \in \mathbb{R}^{n}_0}  \; x^T |c| 
        \quad \\
        \text{ such that }  
        \quad  
        \norm{x}_2 = 1, 
        x \geq 0 \text{ and } 
        \spar \left( x \right) \geq s . 
    \end{gathered} \label{r1GSPv2}
\end{equation}
In fact, the optimal solution of~\eqref{r1GSP} is given by 
$z^* = (|c|^T x^*) \, \sign(c) \circ x^*$,
where $x^*$ is an optimal solution of~\eqref{r1GSPv2}. 
\revise{Although this reformulation is relatively straightforward, it was not present in the literature, 
as far as we know.}

\section{Grouped Sparse Projection (GSP)} \label{algogsp}
% ==================================== EQUATION GSP =========================================
We extend the argument of Section~\ref{sec:single_vector} to a set of vectors and present our novel approach to grouped sparse projection.

\subsection{Formulation of the Problem}

Let $\{ c_i \in \mathbb{R}^{n_i}_0 \}_{i=1}^r$ be a set of non-zero vectors. The main goal of sparse projection is to find a set of non-zero unit-norm vectors 
$\{ x_i \in \mathbb{R}^{n_i}_0 \}_{i=1}^r$ that has an average target sparsity larger than a given $s \in [0,1]$. \revise{Mathematically, let $\boldsymbol{\mathcal{X}} = \{ x_i \in \mathbb{R}_0^{n_i}, i \in \llbracket 1,r  \rrbracket \, : \, x_i \geq 0, \norm{x_i}_2 = 1 \, \forall i \}$}. \revise{We propose the following novel} \textit{grouped sparse projection} problem: 
% \begin{align} \label{GSP}
%     &\max_{x_i \in \mathbb{R}^{n_i}, 1 \leq i \leq r}  \; \sum_{i=1}^r x_i^T |c_i|  \tag{GSP}  \\
%     \quad 
%     \text{ such that } 
%     \quad & \norm{x_i}_2 = 1, 
%     x_i \geq 0 \text{ and } 
%     \frac{1}{r} \sum_{i=1}^r \spar \left( x_i \right) \geq s. 
%     \nonumber 
% \end{align}  
\revise{%
\begin{equation} \label{GSP} \tag{GSP}  
    \max_{\boldsymbol{\mathcal{X}}}  \; \sum_{i=1}^r x_i^T |c_i|   
    \text{ such that } 
    \frac{1}{r} \sum_{i=1}^r \spar \left( x_i \right) \geq s. 
\end{equation} 
} 
The main reason for the choice of this formulation is that it makes~\ref{GSP} much faster to solve. 
In fact, as for~\eqref{r1GSP} that was studied  by~\citet{thom2015efficient}, 
we will be able to reduce this problem to the root finding problem of a nonincreasing function in one variable. 
In particular, using the objective function
%$\min_{x_i \in \mathbb{R}^{n_i}_0, 1 \leq i \leq r} \sum_{i=1}^r \norm{c_i - x_i}_2$
$\min_{\boldsymbol{\mathcal{X}}} \sum_{i=1}^r \norm{c_i - x_i}_2$
(or $\sum_{i=1}^r \norm{c_i - x_i}_2^2$) would not allow such an effective optimization scheme. 

\rerevise{
\begin{remark}[Abuse of terminology] 
The solution to the problem~\eqref{GSP} is not projection, as it does not provide a point within a set closest in some norm to a given point. However, for simplicity, and since it extends the projection in the case of a single vector, see~\eqref{r1GSP} and~\eqref{r1GSPv2}, we abuse the terminology and refer to~\eqref{GSP} as a projection.  
\end{remark}
}

% ========================= REFORMULATION =========================
Let us reformulate~\ref{GSP} focusing on the sparsity constraint: we have 
\begin{align*}
    \sum_{i=1}^r \spar \left( x_i \right) 
     &= \sum_{i=1}^r  \frac{\sqrt{n_i} - \norm{x_i}_1}{\sqrt{n_i} - 1} = \sum_{i=1}^r  \frac{\sqrt{n_i}}{\sqrt{n_i} - 1} 
    			- \sum_{i=1}^r \frac{\norm{x_i}_1}{\sqrt{n_i} - 1} \geq rs,     
\end{align*} 
\revise{where we used the fact that $\norm{x_i}_2 = 1$ for all $i$ in the feasible set $\boldsymbol{\mathcal{X}}$.} 
Denoting $k_s = \sum_{i=1}^r  \frac{\sqrt{n_i}}{\sqrt{n_i} - 1} - rs$ and $\beta_i = \frac{1}{\sqrt{n_i} - 1}$, \ref{GSP} can be reformulated as follows 
% ========================= EQUATION 5 =========================
% \begin{equation}
% \begin{gathered} 
%     \max_{x_i \in \mathbb{R}^{n_i}, 1 \leq i \leq r}  
%     \sum_{i=1}^r x_i^T \abs{c_i} \\
%     \text{ such that }  
%      x_i \geq 0, 
%     \norm{x_i}_2 = 1 \, \forall i \; \text{ and } \; 
%      \sum_{i=1}^r \beta_i e^T x_i \leq k_s.  
% \end{gathered} \label{GSP4}
% \end{equation}
\revise{
\begin{equation}
    \max_{\boldsymbol{\mathcal{X}}}  
    \sum_{i=1}^r x_i^T \abs{c_i} 
   \quad   \text{ such that } \quad 
     \sum_{i=1}^r \beta_i \uveci^T x_i \leq k_s.  
 \label{GSP4}
\end{equation}
}

% (note that, for simplicity, we abuse of the notation since $e$ might have different dimensions for different terms of the sum).
We used $\norm{ x_i}_1 = \revise{\uveci}^T x_i$  since $x_i  \geq 0$. Note that the maximum of~\eqref{GSP4} is attained since the objective function is continuous and the feasible set is compact (extreme value theorem).  

 \subsection{Lagrange Dual Formulation} \label{sec:lagrange_dual}
% LAGRANGE DUAL FORMULATION

\revise{In order to solve~\eqref{GSP4}, we follow a standard dual approach, similarly as done by~\citet{thom2015efficient} for the projection of a single vector. However, our derivations are rather  different because the vectors to be projected share the same Lagrange dual variable, while we need to carefully treat the case when some of the vectors are projected onto a 1-sparse vector.
} 
Let us introduce the Lagrange variable $\mu \geq 0$ associated with the constraint $\sum_{i=1}^r \beta_i \revise{\uveci}^T x_i \leq k_s$. 
The Lagrange dual function with respect to $\mu$ is given by 
% Equation 5
% \begin{align}
%     \ell(\mu) 
%     & =
%     \max_{ \substack{x_i \geq 0, \norm{x_i}_2 = 1, \\ i \in  \llbracket 1,r  \rrbracket}}
%     \sum_{i=1}^r x_i^T |c_i| 
%     - \mu \left( \sum_{i=1}^r \beta_i  x_i^T e - k_s \right) \nonumber \\ 
%     & =
%     \max_{ \substack{x_i \geq 0, \norm{x_i}_2 = 1, \\ i \in  \llbracket 1,r  \rrbracket}}
%     \sum_{i=1}^r x_i^T (|c_i| - \beta_i \mu e) + \mu k_s. \label{GSP5}
% \end{align} 
\revise{
\begin{align}
    \ell(\mu) 
    & =
    \max_{\boldsymbol{\mathcal{X}}}
    \sum_{i=1}^r x_i^T |c_i| 
    - \mu \left( \sum_{i=1}^r \beta_i  x_i^T \uveci - k_s \right) \nonumber \\ 
    & =
    \max_{\boldsymbol{\mathcal{X}}}
    \sum_{i=1}^r x_i^T (|c_i| - \beta_i \mu \uveci) + \mu k_s. \label{GSP5}
\end{align}
}

% SOLUTION OF THE DUAL PROBLEM: TWO CASES FOR THE SEPARABLE x_i's
The dual problem is given by $\min_{\mu \geq 0} \ell(\mu)$. 
The optimization problem to be solved to compute $\ell(\mu)$ is separable in variables $x_i$'s and consequently can be solved individually for each $x_i$. 
Let us denote $x_i\revise{[\mu]}$ the optimal solution of~\eqref{GSP5}. 
For each $i$, there are two possible cases, depending on the value of $\mu$: 
\begin{enumerate} 
\item $|c_i| - \mu \beta_i \revise{\uveci} > 0$: the optimal $x_i[\mu]$ is given by 
\[
x_i\revise{[\mu]} 
\; = \; 
\frac{  \big[|c_i| - \mu \beta_i \revise{\uveci} \big]_+ }{\Big\| \big[ |c_i| - \mu \beta_i \revise{\uveci} \big]_+ \Big\|_2 }
\; = \; 
\frac{ \st( |c_i|, \mu \beta_i) }{\big\| \st( |c_i|, \mu \beta_i) \big\|_2}. 
\] 
This formula can be derived from the first-order optimality conditions.  
Note that this formula is similar to that of~\citet{thom2015efficient}. 
The difference is that the $x_i$'s share the same Lagrange variable~$\mu$.

\item $|c_i| - \mu \beta_i \revise{\uveci} \leq 0$: the optimal $x_i\revise{[\mu]}$ is given by the 1-sparse vector whose nonzero entry corresponds to the largest entry of $|c_i| - \mu \beta_i \revise{\uveci}$, 
that is, of $|c_i|$. 
Note that if the largest entry of $|c_i|$ is attained for several indexes,  then the optimal 1-sparse solution $x_i^*$ is not unique.
Note also that this case coincides with the case above for $\mu$ in the interval between the largest and second largest entry of $c_i$. %if the largest entry of $|x_i|$ is attained by a single entry. 
 
\end{enumerate}

\subsection{Characterizing the Optimal Solution $x_i\revise{[\mu]}$}\label{sec:char_opt_sol}

Let $\tilde{\mu}$ be the smallest value of $\mu$ such that $x_i\revise{[\tilde{\mu}]}$ are all 1-sparse. There are three scenarios based on the value of $\mu$: (a) $\mu=0$, (b) $\mu \geq \tilde{\mu}$ and (c) $0 < \mu < \tilde{\mu}$.

\begin{enumerate}[(a)]
    \item For $\mu = 0$, 
we have $x_i\revise{[0]} = \frac{|c_i|}{\norm{c_i}_2}$. If $x_i\revise{[0]}$  is feasible, that is, 
$\sum_{i=1}^r \beta_i \revise{\uveci}^T x_i\revise{[0]} \leq k_s$, then it is optimal since the error of~\ref{GSP} is zero: this happens when the $c_i$'s are already sparse enough and do not need to be projected.
    \item For $\mu \geq \tilde{\mu}$, all $x_i\revise{[\tilde{\mu}]}$ are all 1-sparse so that $\revise{\uveci}^T x_i\revise{[\tilde{\mu}]} = 1$ hence 
\begin{align}
\nonumber
    g(\tilde{\mu}) &= 
    \sum_{i=1}^r \beta_i - k_s  \\
    &=
    \sum_{i=1}^r \frac{1}{\sqrt{n_i}-1} - \sum_{i=1}^r \frac{\sqrt{n_i}}{\sqrt{n_i}-1} + rs \nonumber \\
    &= r (s-1) \leq 0.     
\end{align} 
The value $\tilde{\mu}$ is given by the second largest entry among the vectors $|c_i|/\beta_i$'s. 
In fact, if $\mu$ is larger than the second largest entry of $|c_i|$, 
then $x_i\revise{[\mu]}$ is 1-sparse. Note that if the largest and second largest entry of a vector $|c_i|$ are equal, 
then $g(\mu)$ is discontinuous. 
This is an unavoidable issue when one wants to make a vector sparse: if the largest entries are equal to one another, one has to decide which one will be set to zero. (For example, $[1,0]$ and $[0,1]$ are equally good 1-sparse representation of $[1,1]$.) 

\item For $0 < \mu < \tilde{\mu}$, the constraint is active at optimality and we need to find \revise{the} value $\mu$ such that 
\[
g(\mu) \; = \;  \sum_{i=1}^r \beta_i \revise{\uveci}^T x_i\revise{[\mu]}  -  k_s \; = \; 0, 
\]
that is, find a root of $g(\mu)$. Theorem~\ref{theo:unique_root} guarantees that the solution $\mu^*$ is unique in this setting, and Corollary~\ref{cor:unique_proj} proves the respective projection $x\revise{[\mu^*] = \big[ x_1[\mu^*], \dots, x_r[\mu^*] \big]}$ is also unique.
\end{enumerate}

\begin{theorem}[Uniqueness of $\mu^*$] \label{theo:unique_root}
The function $g(\mu)$ is strictly decreasing for $0 < \mu < \tilde{\mu}$. Hence, it is not discontinuous around $g(\mu) =0$ and attains a unique root $\mu^*$.
\end{theorem}

\begin{proof}[Proof sketch] The function $g(\mu)$ is continuous as it is a continuous function of $x(\mu)$, which is continuous. Hence, it suffices to show that $g'(\mu) <0$ for $0 < \mu < \tilde{\mu}$. See Appendix~\ref{sec:app_wsgp_theory} for a detailed derivation in a more general case.
\end{proof}

\begin{corollary}[Uniqueness of projection $x^*$] \label{cor:unique_proj}
If the largest entry of each $|c_i|$ is attained for only one entry, the projection $x\revise{[\mu^*]}$ is unique.
\end{corollary}

\begin{proof}[Proof sketch] If $\mu=0$ and $\mu > \tilde{\mu}$, $x(\mu)$ is unique provided the largest and second entry of each $|c_i|$ are not equal (see Section~\ref{sec:char_opt_sol}). For $0 < \mu < \tilde{\mu}$, since the optimal $\mu^*$ is unique it suffices to prove $x_i\revise{[\mu]}$ is a one-to-one mapping, i.e., that if $x_i(\mu_1) = x_i(\mu_2)$ then $\mu_1 = \mu_2$. Detailed derivations shows that $x_i\revise{[\mu]}$ is a one-to-one mapping unless all entries of $c_i$ are equal, which is in contrast with the assumption (see Appendix~\ref{app:gsp}).
\end{proof}

Once the optimal $\mu^*$ and the respective projections $x_i\revise{[\mu^*]}$ are computed, one can retrieve the sparse projections via $z_i^* = (|c_i|^T x_i\revise{[\mu^*]}) \, \sign(c_i) \circ x_i\revise{[\mu^*]}$ for \revise{$i \in \llbracket 1,r  \rrbracket$}.

\begin{algorithm*}[t]
\caption{GSP$\left(\{ c_i \in \mathbb{R}^{n_i} \}_{i=1}^r, s, \epsilon, r_l \right)$} 
\label{GSPalgo}
\begin{algorithmic}[1]
\STATE \textbf{input:} $\{ c_i \in \mathbb{R}^{n_i} \}_{i=1}^r$, 
the average sparsity $s \in [0,1]$, 
the accuracy $\epsilon$, 
the parameter $r_l \in [1/2,1)$
%that guarantees linear convergence at rate $r_l$ (default: $r_l = 0.9$). 
\STATE \textbf{output:} $\{ z_i \in \mathbb{R}^{n_i} \}_{i=1}^r$
%approximate solution of~\eqref{GSP} 
with average sparsity in $[s-\epsilon,s+\epsilon]$
%unless $g(\mu)$ is discontinuous around $g(\mu) = 0$ in which case it returns $\mu^*$ 
%such that $g(\mu^*+\epsilon \mu^*) < 0 < g(\mu^*-\epsilon \mu^*)$. 

   \STATE $\underline{\mu} = 0$, 
   $\bar{\mu} = \tilde{\mu}$, 
   $\mu^* = 0$, $\Delta =\bar{\mu}-\underline{\mu}$.  
   
	\WHILE{ $|g(\mu^*)| > r \epsilon$ }
	    
	    \STATE $\mu^{old} = \mu^*$
		\STATE $\mu^*  = \mu^* + \frac{k - g(\mu^*)}{g'(\mu^*)}$  \hfill $\rhd$ Newton's step 
		
% 		\IF{ $\mu^* \notin [\underline{\mu}, \bar{\mu} ]$ }
% 		    %\STATE $ \mu^* = \underline{\mu} + (k-g(\underline{\mu})) \frac{\bar{\mu}-\underline{\mu}}{g'(\bar{\mu})-g'(\underline{\mu})} $ --Secant
% 		    \STATE $ \mu^* = \frac{\underline{\mu} + \bar{\mu}}{2}$   \hfill $\rhd$ Bisection method if Newton's step fails
% 		\ENDIF 

		\STATE \textbf{if} $\mu^* \notin [\underline{\mu}, \bar{\mu} ]$ \, \textbf{then} \, $ \mu^* = \frac{\underline{\mu} + \bar{\mu}}{2}$ \, \textbf{end}\, \textbf{if}  \hfill $\rhd$ Bisection method if Newton's step fails
		
% 		\IF{ $g(\mu^*) > 0$ }
		    
% 		    \STATE  $\underline{\mu} = \mu^*$ 
		    
% 		\ELSE
		
% 		    \STATE $\bar{\mu} = \mu^*$ 
		    
% 		\ENDIF 
		
		\STATE \textbf{if} $g(\mu^*) > 0$ \, \textbf{then} \, $\underline{\mu} = \mu^*$ \, \textbf{else} \, $\bar{\mu} = \mu^*$ \, \textbf{end}\, \textbf{if}  \hfill $\rhd$ Update feasible interval
		
		\IF{$\bar{\mu} - \underline{\mu} > r_l \Delta$ and $|\mu^{old} - \mu^*| < (1 - r_l)\Delta$}

% 		\STATE $(\bar{\mu}, \underline{\mu})  = $ \textrm{NewtonPatch}$(\bar{\mu}, \underline{\mu}, \mu^*, \mu^{old}, \Delta, r_l, g$) \hfill $\rhd$ If feasible interval is not reduced (see Algorithm~\ref{appNewtonpatch})
		
		\STATE $ \mu^* = \frac{\underline{\mu} + \bar{\mu}}{2}$ \hfill $\rhd$ If feasible interval is not reduced use bisection again
		
		\STATE \textbf{if} $g(\mu^*) > 0$ \, \textbf{then} \, $\underline{\mu} = \mu^*$ \, \textbf{else} \, $\bar{\mu} = \mu^*$ \, \textbf{end}\, \textbf{if}  \hfill $\rhd$ Update feasible interval
		\ENDIF

		\STATE 	$\Delta = \bar{\mu}-\underline{\mu}$  \hfill $\rhd$ Update diameter $\Delta$
			
		%\STATE \% Detect discontinuity, $\mathcal{D}$ contains the set of discontinuous points 
		
		\IF{ $\Delta < \epsilon \mu^*$  and $|\mu^*-\mu| < \epsilon \mu^*$ for some $\mu \in \mathcal{D}$ }
		
		    \STATE break;   \hfill $\rhd$ $\mathcal{D}$ contains the set of discontinuous points, stop GSP
		
		\ENDIF 
				
	\ENDWHILE  
	
	\STATE \textbf{return} $z_i^* = \{ |c_i|^T x_i\revise{[\mu^*]} \, \sign(c_i) \circ x_i\revise{[\mu^*]} \}_{i=1}^r$  \hfill $\rhd$ Compute $x_i\revise{[\mu^*]}$ (see Section~\ref{sec:lagrange_dual}) and return projections
	
\end{algorithmic}
\end{algorithm*}
%%%%%%%%%%%%%%%%%%%%%%%%%%%%%%%%%%%%%%%%%%%%%%%%%%%%%%%%%%%%%%%%%%%%%%%%%%%%%%%%%%%%%%%%%%
% \begin{algorithm}[t]
% \caption{\textrm{NewtonPatch}($\bar{\mu}, \underline{\mu}, \mu^*, \mu^{old},\Delta, r_l, g$)} 
% \label{newtonpatch}
% \begin{algorithmic}[1]
% 			\STATE // Newton's method failed to reduce the feasible interval enough to guarantee linear convergence at rate $r_l$
% % 			\STATE // Newton's method fails linear convergence
			
% 			\IF{ $\bar{\mu} - \underline{\mu} > r_l \Delta$ and $|\mu^{old} = \mu^*| < (1-r_l) \Delta$}
% 		    \STATE // Bisection method used    
% 		    \STATE $ \mu^* = \frac{\underline{\mu} + \bar{\mu}}{2}$
% 		    		\IF{ $g(\mu^*) > 0$ }
		    
% 		        \STATE  $\underline{\mu} = \mu^*$ 
% 		    \ELSE
% 		        \STATE $\bar{\mu} = \mu^*$ 
		    
% 		\ENDIF 
% 		\ENDIF 
% \STATE Return: ($\bar{\mu}, \underline{\mu}$) 
% \end{algorithmic}
% \end{algorithm}
% --------------- END ALGORITHM -----------------------
%%%%%%%%%%%%%%%%%%%%%%%%%%%%%%%%%%%%%%%%%%%%%%%%%%%%%%%%%%%%%%%%%%%%%%%%%%%%%%%%%%%%%%%%%%

\subsection{\revise{Implementation and} Computational Cost} \label{sec:compcost}

\revise{In order to compute the dual variable $\mu$, which leads to the desired average Hoyer sparsity for the vectors $x_i[\mu]$'s, we resort to a standard approach to compute the root of the nonlinear function, $g(\mu)$, namely Newton's method which has quadratic convergence (given that we are sufficiently close to a root, and that the function is sufficiently smooth). 
Because $g(\mu)$ is not smooth everywhere, we have implemented the bisection method as a fallback procedure, which guarantees the algorithm's computational complexity to be upper-bounded by $\mathcal{O}(N\log N)$. The use of bisection is necessary if Newton's method either goes outside of the current feasible interval $[\underline{\mu}, \bar{\mu}]$ containing the solution $\mu^*$ or stagnates locally and does not converge (although we have not observed the latter behavior in practice). 
This is a a standard textbook strategy in numerical analysis.} 

Algorithm~\ref{GSPalgo} summarizes the \ref{GSP} algorithm. 
The main computational cost of \ref{GSP} is to compute $g(\mu)$ and $g'(\mu)$ at each iteration, which requires $\mathcal{O}(N)$ operations where $N = \sum_{i=1}^r n_i$. Most of the computational cost resides in computing the $x_i\revise{[\mu]}$'s and some inner products. The total computational cost is $\mathcal{O}(tN)$, where~$t$ is the number of Newton's method iterations, hence making \ref{GSP} linear in the size of the problem. In practice, we observed that Netwon's method converges very fast and does not require many iterations $t$ (see Appendix~\ref{AppCompCost} for a synthetic data example with vectors of dimension $n_i = 1000$, in which Newton's achieves convergence in $t=4$ iterations or less).  In our experiments we have observed that using Newton's method with initial point $\mu = 0$ performs well, which is in line with the results of \citet{thom2015efficient}, as points where the function is not differentiable typically form a set of measure zero \citep{kummer1988newton}.
The reason to choose $\mu = 0$ as the initial point is because $g(\mu)$ decreases initially fast (all entries of $x_i$ corresponding to a nonzero entry of $c_i$ are decreasing) while it tends to saturate for a larger $\mu$. In particular, we cannot initialize $\mu$ at values larger than $\tilde{\mu}$ since $g(\mu)$ is constant ($g'(\mu) = 0$) for all $\mu \geq \tilde{\mu}$. Finally, we have included a method to detect discontinuities, corresponding to situations when the largest entries of the $c_i$'s are not unique. In that case we require \ref{GSP} to return a $\mu$ such that $g(\mu+\epsilon \mu) < 0 < g(\mu-\epsilon \mu)$, where $\epsilon$ is a desired accuracy (which in practice can be set to $\epsilon=10^{-4}$).

\section{Weighted Grouped Sparse Projection} \label{WGSP}

In some settings, one may want to minimize a weighted sum of the entries of the $x_i$'s, that is, $\sum_{i=1}^r w_i^T x_i \leq k$ for some $w_i \in \mathbb{R}^{n_i}_+$. 
This amounts to replacing the $\ell_1$-norm in the sparsity measure $\spar(.)$ by a weighted $\ell_1$ norm. We introduce the notion of \textit{weighted sparsity}: 
For a vector $x \in \mathbb{R}^n_0$ and given $w \in \mathbb{R}^n_{0,+}$, we define
% \begin{equation} \label{wspmeas} 
%     \spar_w(x) = \frac{ \norm{ w}_2 - \frac{ \norm{x}_w}{\norm{x}_2} }{\norm{w}_2 - \min_i w(i) } \; \in \; [0,1],  
% \end{equation} 
% where $\norm{x}_w = w^T |x|$.  
\revise{
\begin{equation} \label{wspmeas} 
    \spar_w(x) = \frac{ \norm{ w}_2 - \frac{ \norm{Wx}_1}{\norm{x}_2} }{\norm{w}_2 - \min_i w(i) } \; \in \; [0,1],  
\end{equation} 
where $W = {\rm diag}(w)$. 
} 
We use the weighted sparsity in \eqref{wspmeas} to define the $w$-sparse projection problem of the set of vectors $\{c_i \in \mathbb{R}^{n_i}\}_{i=1}^r$.
Let $\{w_i \in \mathbb{R}^{n_i}_{0,+}\}_{i=1}^r$ be the nonzero nonnegative weight vectors associated with the $c_i$'s. Given a target average weighted sparsity $s_w \in [0,1]$, we formulate the weighted group sparse projection (WGSP) problem as follows: 
% \begin{equation} 
% \begin{gathered}
%     \max_{\tilde{x}_i \in \mathbb{R}^{n_i}, 1 \leq i \leq r}
% \; 
% \sum_{i=1}^r \bar{x_i}^T |x_i|  \\
% \text{ such that }
% \bar{x_i} \geq 0, 
% \norm{\bar{x_i}}_2 = 1 \, \forall i \; 
% \text{ and } \;
% \frac{1}{r} \sum_{i=1}^r \spar_{w_i} \left( \bar{x_i} \right) \geq s_w . \tag{WGSP} \label{WGSP4}
% \end{gathered}
% \end{equation} 
\revise{
\begin{equation} 
    \max_{\boldsymbol{\mathcal{X}}}
\; 
\sum_{i=1}^r x_i^T |c_i|  
\quad \text{ such that } \quad 
\frac{1}{r} \sum_{i=1}^r \spar_{w_i} \left( x_i \right) \geq s_w . \tag{WGSP} \label{WGSP4}
\end{equation} 
} 
We have included properties of the proposed weighted sparsity, along with full derivation of the \ref{WGSP4} problem, the optimal solutions and experiments in Appendix~\ref{AppWGSPsparsity}.

\section{Experiments} \label{experiments}

% ==============================================================================================
% ======================================= SPARSE NMF ===========================================
% ==============================================================================================
In this section, we evaluate the utility of GSP in two applications: sparse NMF and deep learning network pruning.
The goal is to evaluate the effect of performance at desired sparsity
in the unsupervised and supervised setting.

\subsection{gspNMF: Projection-based Sparse NMF} \label{subNMF}

\revise{
As explained in Section~\ref{sec:relwork}, the standard NMF problem is formulated as in~\eqref{nmfFro}. 
As studied in detail in the recent book by~\citet{gillis2020}, 
the most widely used and efficient algorithms for NMF are block coordinate descent algorithms, and two of the most popular and effective ones are the following: 
\begin{enumerate}
    \item A-HALS updates alternatively the columns of $\mathrm{X}$ and the rows of $\mathrm{H}$ using a closed-form solution~\citep{gillis2012accelerated},
    
    \item NeNMF updates alternatively the full factors $\mathrm{X}$ and  $\mathrm{H}$. The update of each factor requires to solve convex non-negative least squares problems that NeNMF solves approximately using a few steps of Nesterov fast gradient method (FGM), an optimal first-order method~\citep{guan2012nenmf}. 
\end{enumerate}
Sparse NMF enforces an additional sparsity constraint on $\mathrm{X}$ and our GSP leads to a natural sparse NMF formulation: given an average sparsity level for the columns of $X$, $s$, solve  
\begin{equation}
    \begin{gathered} 
    \min_{\mathrm{X} \in \mathbb{R}^{m \times r}, \mathrm{H} \in \mathbb{R}^{r \times n}}
    \;  
    \norm{ \mathrm{Y} - \mathrm{X} \mathrm{H}}_F^2 \\
    \quad \text{ such that } 
    \mathrm{X} \geq 0, 
    \mathrm{H} \geq 0 \text{ and } 
    \frac{1}{r} \sum_{k=1}^r \spar  \left(\mathrm{X}(:,k) \right) \geq s, 
    \end{gathered}\label{GSPnmfFro}
\end{equation}
where $\mathrm{X}(:,k)$ is the $k$th column of $\mathrm{X}$. 
To solve~\eqref{GSPnmfFro}, we use the same update as A-HALS for $\mathrm{H}$ as it is not affected by the sparsity
requirement. 
For updating $\mathrm{X}$, we adapt the update of NeNMF
\citep{guan2012nenmf} by replacing the projection onto the nonnegative
orthant with \ref{GSP}; we refer to this new sparse NMF algorithm as group sparse projection NMF (gspNMF). 
Note that our feasible set is not convex, hence FGM is not guaranteed to converge. However, we use a restarting scheme and keep the best iterate in memory. We compare gspNMF with 
\begin{enumerate}
    \item The same algorithm where the projection is performed column-wise with the columns constrained to have the same sparsity level, as done by \cite{thom2015efficient}. This algorithm solves the same problem~\eqref{GSPnmfFro} where the last constraint is replaced by 
$\spar  \left(\mathrm{X}(:,k) \right) \geq s$ for all $k$. We refer to this algorithm as cspNMF. 

\item A-HALS with $\ell_1$ penalty described by \citet{gillis2012sparse} that uses the formulation 
\begin{equation}
    \begin{gathered} 
    \min_{\mathrm{X} \in \mathbb{R}^{m \times r}, \mathrm{H} \in \mathbb{R}^{r \times n}}
    \;  
    \norm{ \mathrm{Y} - \mathrm{X} \mathrm{H}}_F^2 + 
    \sum_{k=1}^k \lambda_k \left\| \mathrm{X}(:,k) \right\|_1 \\
    \quad \text{ such that } 
    \mathrm{X} \geq 0, 
    \mathrm{H} \geq 0, \text{ and } 
      \left\| \mathrm{X}(:,k) \right\|_\infty = 1 \text{ for }  k   \in \llbracket 1,r  \rrbracket, 
    \end{gathered} \label{SparsenmfFro}
\end{equation} 
where the $\ell_1$ penalty parameters for each column of $\mathrm{X}$, $\lambda_k$'s, are automatically tuned to achieve the same desired $\ell_0$ sparsity level\footnote{\revise{Although $\ell_1$ A-HALS does not use the same sparsity measure as gspNMF and cspNMF, these two measures are  close to one another in most cases. For example, in the synthetic data experiments (see below), we will generate $n$-dimensional vectors using the Gaussian distribution and setting the negative entries to zero. Such vectors have an expected $\ell_0$ sparsity of 50\%.  Generating 100 such vectors, their average Hoyer sparsity is 48.31\% while their average $\ell_0$ sparsity is 49.9\%.}}; see the code provided by the author. We refer to this algorithm as $\ell_1$ A-HALS.
\end{enumerate}
} 

We compare all algorithms in terms of error per iteration.
Since all algorithms have almost the same computational complexity per iteration\footnote{The main computation cost resides in computing $\mathrm{Y} \mathrm{H}^T$ (resp.\@ $\mathrm{Y} \mathrm{X}$) when updating $\mathrm{X}$ (resp.\@ $\mathrm{H}$) which requires $O(mnr)$ operations. gspNMF and cspNMF are slightly more expensive because of the projection step, although this is not the main computational cost (the projection cost is linear in $m$ and $r$). \revise{For example, on synthetic data sets for $n$ sufficiently large (see below), gspNMF and cspNMF took on average the same computational time than NeNMF and A-HALS (in fact, for some runs with $n = 10^4$, gspNMF surprisingly ran slightly faster than NeNMF and A-HALS, due to inaccuracy in accounting for the computational time).}}, we believe it is fair and more insightful to compare these five algorithms with respect to the iteration number. In all experiments, we perform $500$ iterations (updates of $\mathrm{X}$ and $\mathrm{H}$) of each algorithm. See Appendix~\ref{appNMFsection} for details about the experimental parameters and datasets used.

% ==================================== SYNTHETIC DATA ====================================

\begin{figure*}[htbp]
%\includegraphipcs[width=\linewidth]{images/sparseNMFsynth} 
\hspace*{0.1in}
\includegraphics[width=0.40\linewidth]{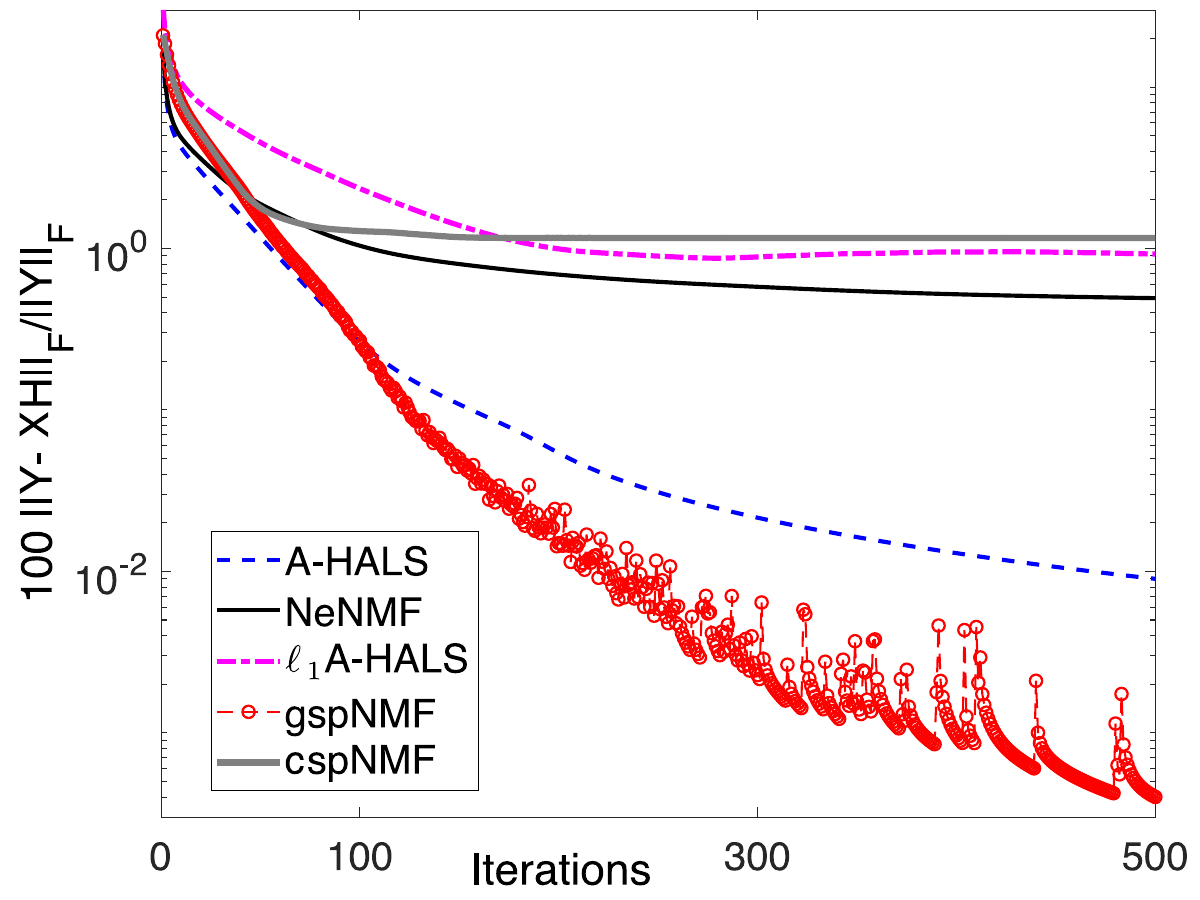}
\hspace*{0.70in}
\includegraphics[width=0.40\linewidth]{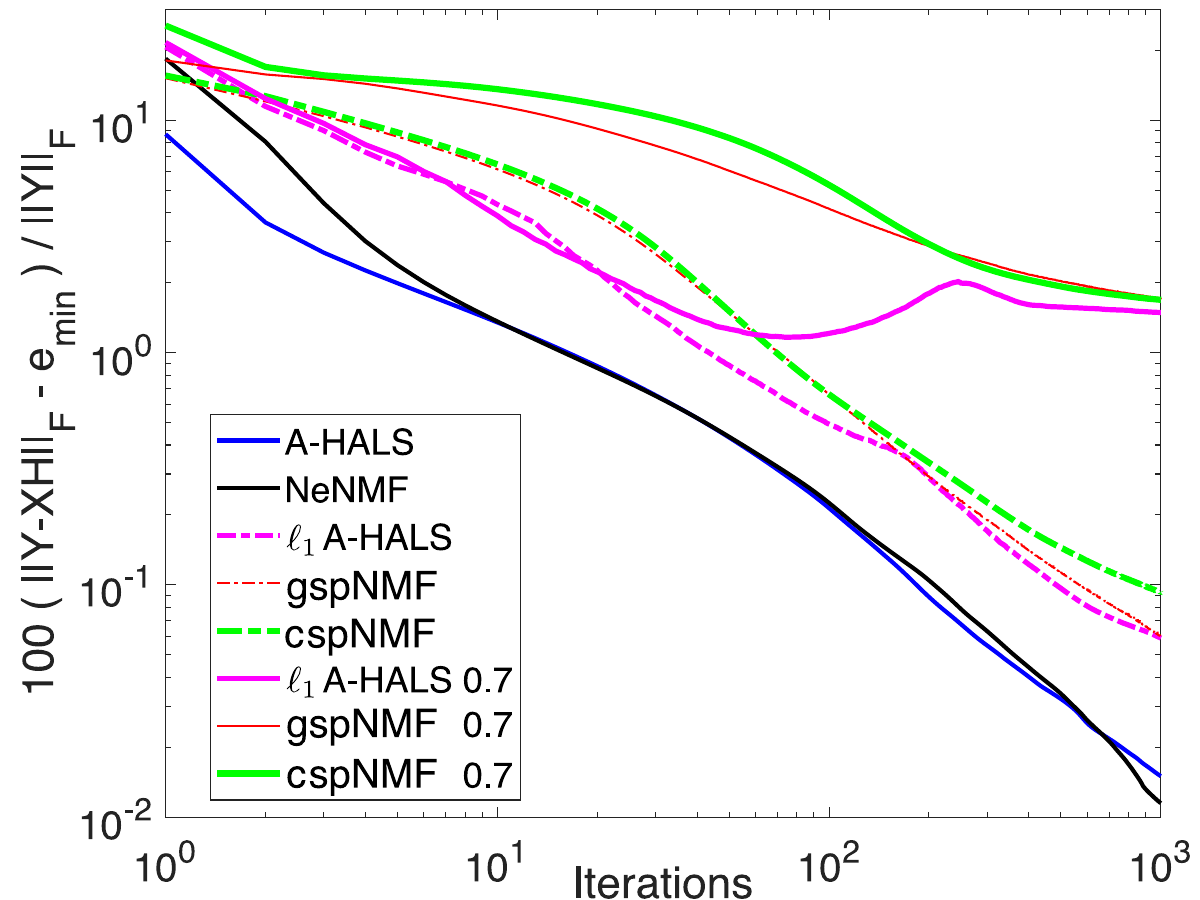} 
\caption{Comparison of NMF and sparse NMF algorithms. 
On the left: Average relative error $100 \frac{\norm{\mathrm{Y}-\mathrm{X}\mathrm{H}}_F - e_{\min}}{\norm{\mathrm{Y}}_F}$ obtained with different NMF algorithms over 50 synthetic data sets.  
On the right: Average relative error in percent to which the lowest error $e_{\min}$ obtained among all algorithms and all initializations is subtracted (hence the error should go to zero for the best algorithm), that is, $100 \frac{\norm{\mathrm{Y}-\mathrm{X}\mathrm{H}}_F - e_{\min}}{\norm{\mathrm{\mathrm{Y}}}_F}$, over 10 random initializations on the CBCL data set with $r=49$.}  
\label{gspNMFresults}
\vspace{-10pt}
\end{figure*} 

\paragraph{gspNMF on synthetic data} We first perform experiments on synthetic data sets where $\textrm{W}$ and $\textrm{H}$ are generated randomly, with  $\textrm{W}$ having  about 50\% of its entries equal to zero, and  with setting $m=n=100$ and $r=10$; see Appendix~\ref{sec:syntdata} for the details. We note an interesting finding: if gspNMF is given the sparsity of an exact factorization, it converges to an exact solution much faster than A-HALS and NeNMF. In other words, gspNMF is able to use the prior information to its advantage. Although A-HALS and NeNMF are less constrained, \revise{as they solve~\eqref{nmfFro} instead of~\eqref{GSPnmfFro}}, they converge \revise{significantly} slower.
From Figure~\ref{gspNMFresults} we observe that gspNMF performs better 
than A-HALS, which in turn outperforms NeNMF. gspNMF reduces the error
towards zero much faster, although having a higher initial error as it is more constrained.

Since cspNMF is more constrained, it is not able to perform as well as gspNMF. \revise{This illustrates the significant benefit of using an average sparsity instead of a single sparsity level for all columns of $\mathrm{X}$. For similar reasons,  $\ell_1$ A-HALS performs worse as it attempts to achieve the same given sparsity level for all columns of $\mathrm{X}$.} 
%We also find that produces a biased solution and it is hence not able to compete with A-HALS. 

% ==================================== Image Data ====================================
\paragraph{gspNMF on CBCL facial image data}

We test our method on one of the most widely used datasets in the NMF literature, the CBCL facial images used in the seminal work by \citet{lee1999learning} with $r=49$. 
NeNMF and A-HALS for NMF~\eqref{nmfFro} generate solution with average Hoyer sparsity about 70\%. 
We run the sparse NMF techniques with sparsity set at $85\%$, hence we expect sparse NMF to have higher approximation errors but have  higher sparsity. Using 10 random initializations, Figure~\ref{gspNMFresults} reports the evolution of the average relative error.  \revise{In this case, gspNMF performs similar to the state-of-the-art algorithms, cspNMF and $\ell_1$ A-HALS, because the columns of $\mathrm{X}$ have similar sparsity levels.} 
We include the basis elements obtained by each different methods in Fig~\ref{figCBCLall} of  Appendix-\ref{appNMFImage}.

%the different sparse NMF algorithms perform similarly which can be

% Concurrently, another group proposed to use Hoyer-type projections into deep learning architectures but they instead chose to regularize the network with the sparsity measure~\cite{yang2019deephoyer} which necessitates choosing an additional non-intuitive regularization parameter. Additional experiments and results, including a sparse LeNet 300-100, are detailed in the supplementary material.=

%%%%%%%%%%%%%%%%%%%%%%%%%%%%%%%%%%%%%%%%%%%%%%%%%%%%%%%%%%%%%%%%%%%%%%%%%%%%%%%
% ========================== LeNet and CNN Plot ==============================
% \begin{figure*}[htp]
%   \centering
%   \subfigure[Lenet-300-100 Feed Forward Network.\label{fig2}]{\includegraphics[scale=0.4]{images/ffn_sps_acc.eps}}\hfill
%   \subfigure[Lenet-5 Convolutional Neural Network.\label{fig3}]{\includegraphics[scale=0.4]{images/cnn_sps_acc.eps}}
%   \caption{Sparsity vs Accuracy. Performance of the baseline model without imposed sparsity (hence constant value) is included for comparison.}
% \end{figure*}
%%%%%%%%%%%%%%%%%%%%%%%%%%%%%%%%%%%%%%%%%%%%%%%%%%%%%%%%%%%%%%%%%%%%%%%%%%%%%%%

% =============================================================================================
% ============================== PRUNING DEEP NETWORKS =======================================
% =============================================================================================
\subsection{Pruning Deep Neural Networks with GSP} \label{subDNNprune}

We evaluate performance of GSP in pruning each layer of a modern convolutional neural network using a fixed value of the sparsity parameter $s$.
Although we chose to prune each layer with the same sparsity to simplify the comparison, GSP can easily be used with different sparsity levels for parameters of different layers, providing fine control over the sparsity of the network.

% \textcolor{red}{May be include how we can prevent layer collapse exploiting the fact that GSP preserves at least one weight per layer}

To simplify comparison with related work we express the sparsity of
the pruned network in terms of the number of zeroed parameters instead
of $\spar(x)$~\eqref{hoyerSPSeq}. Since $\spar(x)$ is a differentiable
approximation of the $\ell_0$ norm, applying GSP with sparsity $s$ to
a layer of the network pushes $s$ fraction of the parameters to zero
or near zero, thus retaining interpretability of the parameter $s$.

We run experiments on the CIFAR-10 \citep{krizhevsky2009learning} dataset with the VGG16 model and on the ILSVRC2012 Imagenet dataset \citep{ILSVRC15} with the ResNet50 model \citep{he2016deep}\footnotemark{}. 
% We project and prune the convolutional and linear layers conforming to the standards in the pruning literature. 
For the fully connected layers, we project the connections in each
layer separately, with a target sparsity $s$. This ensures the weights
of that particular layer have $\spar(x) = s$. For the convolutional layers, we project each $c \times k \times k$ filter treating each as a vector $x_i \in \mathbb{R}^{n}$ in our formulation, where $n = c \times k^2$ and $c$ is the number of input channels. All the filters in a particular layer are projected at once.
\footnotetext{The code is available at \href{https://github.com/riohib/gsp-for-deeplearning}{https://github.com/riohib/gsp-for-deeplearning}}

% \newcommand{\lastcolnum}{3}

% \begin{table}[ht]
% \centering
% \makebox[0.45 \textwidth][c]{
% \resizebox{0.45 \textwidth}{!}{
%     \begin{tabular}{c|c|c}
%         \toprule
%         Methods & \multicolumn{2}{c}{Nonzero weights left after pruning} \\\cmidrule{1-\lastcolnum}
%         & LeNet-300-100 & LeNet-5 CNN \\\cmidrule{1-\lastcolnum}
%         \multirowcell{2}{Original dense model \\ total parameters} & \multirowcell{2}{266.2k} & \multirowcell{2}{430.5k}  \\
%           && \\\cmidrule{1-\lastcolnum}
%         \cite{han2015deep}   & 21.8k (8\%)  & 36k (8\%)\\
%         \cite{zhang2018systematic}   & 11.6k (4.37\%) & 6.1k (1.4\%)\\
%         \cite{lee2018snip}   & 13.3k (8\%) & 8.6k (2.0\%) \\
%         \cite{ma2019transformed}   & 6.4k (2.40\%) & 5.4k (1.3\%) \\\cmidrule{1-\lastcolnum}
        
%         \cite{yang2019deephoyer}    & 4.6k (1.74\%) & 3.5k (0.8\%)\\\cmidrule{1-\lastcolnum} 
%         GSP & 10.6k (4\%) & 17.2k (4\%) \\
%         GSP$^{\dagger}$   & 4.6k (1.76\%) & 4.7k (1.1\%) \\
%         \bottomrule
%     \end{tabular}
%     }
% }
    
% \caption{ Element-wise pruning results on LeNet-300-100 Feed Forwards model and LeNet-5 CNN model for GSP compared to recent network pruning techniques. $\dagger$ For this sparsity range, GSP resulted in a slight accuracy loss of 0.15\% and 0.29\% for LeNet-300-100 and LeNet-5 respectively.}    
% \label{table:2}
% \end{table}

% ======================== Random Pruning Comparison ======================== %
\newcommand{\lastcolnumm}{6}
\begin{table}[ht]
\centering
\makebox[0.55 \textwidth][c]{
\resizebox{0.55 \textwidth}{!}{
    \begin{tabular}{c|c|c|c|c|c}
        \toprule
        VGG16 on CIFAR-10 & \multicolumn{4}{c}{Model Sparsity} \\\cmidrule{1-\lastcolnumm}
        Methods & 80\% & 85\% & 90\% & 95\% & 97\% \\\cmidrule[0.9pt]{1-\lastcolnumm}
        Dense Baseline & 92.82 \% & - & - & - & - \\\cmidrule{1-\lastcolnumm}
        Random Pruning & 83.7\% & 82.75\% & 81.56 \% & 78.18\% & 76.3\% \\\cmidrule{1-\lastcolnumm}
        
        \multirowcell{2}{Magnitude Pruning \\ \citep{han2015learning} } & \multirowcell{2}{\textbf{92.78\%}} & \multirowcell{2}{\textbf{92.74\%}} & \multirowcell{2}{\textbf{92.5\%}} & \multirowcell{2}{91.41\%} & \multirowcell{2}{10\%} \\
          &&&& \\\cmidrule{1-\lastcolnumm}
          
        \multirowcell{2}{DeepHoyer \\ \citep{yang2019deephoyer} } & \multirowcell{2}{91.23\%} & \multirowcell{2}{91.2\%} 
        & \multirowcell{2}{91.42\%} & \multirowcell{2}{91.47\%} & \multirowcell{2}{$91.54\%$} \\
          &&&& \\\cmidrule{1-\lastcolnumm}
      
        % \multirowcell{2}{SNIP \\ \cite{lee2018snip} } & \multirowcell{2}{-} & \multirowcell{2}{-} 
        % & \multirowcell{2}{91.42\%} & \multirowcell{2}{91.47\%} & \multirowcell{2}{$91.54\%$} \\
        %   &&&& \\\cmidrule{1-\lastcolnumm}
        
        GSP & 92.37\% & 92.28\% & 92.39\% & \bf 92.32\% & \bf 91.92\% \\%\cmidrule{1-\lastcolnumm}
        \bottomrule
    \end{tabular}
    }
}
\caption{ Test Accuracy of pruned VGG16 network using GSP vs random
  pruning and other pruning methods for sparsity varying in the
  $80\%$ to $97\%$ range.}    
\label{table:randPrune}
% \vspace{-5pt}
\end{table}
% =============================== OLD LENET TABLE CIFAR 10 =============================== %
% \newcommand{\lastcolnum}{3}
% \begin{table}[ht]
% \centering
% \makebox[0.45 \textwidth][c]{
% \resizebox{0.45 \textwidth}{!}{
%     \begin{tabular}{c|c|c}
%         \toprule
%         CIFAR-10 & \multicolumn{2}{c}{Nonzero weights left after pruning} \\\cmidrule{1-\lastcolnum}
%         Methods & VGG16 & LeNet-5 CNN \\\cmidrule{1-\lastcolnum}
%         \multirowcell{2}{Original dense model \\ total parameters} & \multirowcell{2}{15.2M} & \multirowcell{2}{430.5k}  \\
%           && \\\cmidrule{1-\lastcolnum}
%         \cite{han2015deep}   & - & 36k (8\%)\\
%         \cite{zhang2018systematic}   & - & 6.1k (1.4\%)\\
%         \cite{lee2018snip}   & 1.22M (8\%) & 8.6k (2.0\%) \\
%         \cite{ma2019transformed}   & - & 5.4k (1.3\%) \\\cmidrule{1-\lastcolnum}
        
%         \cite{yang2019deephoyer}    & 4.6k (1.74\%) & 3.5k (0.8\%)\\\cmidrule{1-\lastcolnum} 
%         GSP & 10.6k (4\%) & 17.2k (4\%) \\
%         GSP$^{\dagger}$   & 4.6k (1.76\%) & 4.7k (1.1\%) \\
%         \bottomrule
%     \end{tabular}
%     }
% }
% \caption{ \textcolor{purple}{Change Table. Element-wise pruning results on LeNet-300-100 Feed Forwards model and LeNet-5 CNN model for GSP compared to recent network pruning techniques.} $\dagger$ For this sparsity range, GSP resulted in a slight accuracy loss of 0.15\% and 0.29\% for LeNet-300-100 and LeNet-5 respectively.}    
% \label{table:2}
% \end{table}

% \vspace*{-1mm}
We test two pruning strategies using GSP. The first strategy consists
in projecting the layers of a model every specified number of
iterations, then pruning the weights post-training followed by
finetuning of the surviving weights -- we call this ''induced-GSP''
training. In the second strategy, we start with a pretrained model,
project the layers once, and finetune the model. The second approach
skips the sparsity inducing training phase, which is a requirement for
popular regularization-based sparse techniques. We denote this
approach as "single-shot GSP".

Table~\ref{table:randPrune} reports the test accuracy of induced-GSP at different sparsity levels against (a) random pruning, which randomly selects $(1-s)$ fraction of the model weights and retrains them, (b) magnitude pruning, where the top $(1-s)$ fraction of the parameters are selected post-training and retrained and (c) DeepHoyer (DH, \citealt{yang2019deephoyer}), a high-performance regularization-based pruning technique. GSP clearly outperforms random pruning and finds out important connections in the network. While magnitude pruning performs slightly better than GSP at lower sparsity levels, GSP achieves comparable and higher accuracies in the sparsity region $s > 90\%$. Finally, GSP achieves higher accuracy than DH throughout the different sparsity levels tested. We note that since DH requires a regularization parameter to be tweaked using naive search, we trained upwards of $60$ models to achieve the reported performance. This quickly becomes infeasible for training ImageNet level tasks without using significant resources.

% =============================== IMAGENET FIGURE =============================== %
\raggedbottom
    \begin{figure}[ht]
    \centering
    % \hspace*{-0.0in}
    \includegraphics[width=0.7\linewidth]{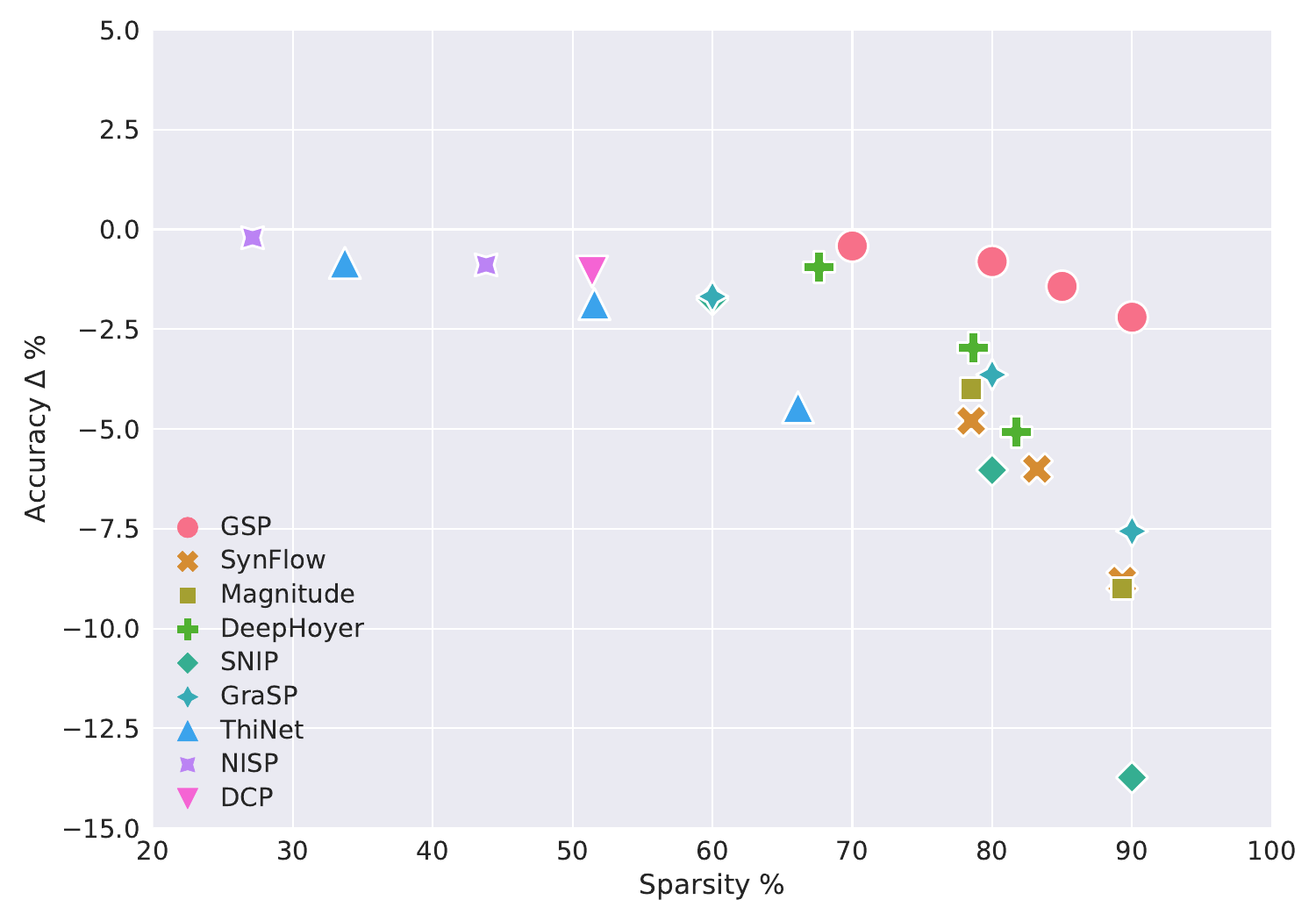}
    \vspace*{-6mm}
    \caption{Accuracy  change relative  the dense  model at  different
      levels of sparsity for multiple state-of-the-art network pruning
      techniques applied to ResNet50  on the ImageNet dataset.  Values
      towards  top-right are  better. Notably,  GSP provides  superior
      sparsity vs accuracy
      trade-off.  % \nic{We should include the figure take-away in the caption.}
    }
    \label{FigImagenet}
    \vspace{-13pt}
\end{figure}
% =============================================================================== %

% ============================== IMAGENET ==============================
% However, CIFAR-10 is a small-scale dataset.
\paragraph{ResNet50 on ImageNet}
%  CIFAR-10 \textcolor{purple}{(add others if time) is a/are} small-scale dataset and hard to draw conclusions from. 
Testing pruning techniques on the ImageNet dataset \citep{ILSVRC15} is becoming a standard in the field of model compression as it is a large-scale complex dataset. Therefore, we test the efficacy of GSP on the ImageNet dataset and compare it with a range of pruning techniques from the literature. We compare against regularization-based \citep{yang2019deephoyer}, magnitude-based \citep{han2015deep}, 
and \rerevise{structure-based} pruning \citep{luo2017thinet, yu2018nisp, zhuang2018discrimination}. We also test GSP against pruning at initialization techniques by  \citet{lee2018snip, wang2020picking, tanaka2020pruning}, that have the added benefit of training a sparse model but the disadvantage of having almost no training information to guide pruning. We project the layers of a Res-Net50 model with GSP every fixed iterations and subsequently prune the parameters identified by our method according to set sparsity values. Pruning a network is a trade-off between sparsity and accuracy. Hence, in Figure~\ref{FigImagenet} we report the sparsity and accuracy of GSP against other pruning techniques. We observe that even at relatively high sparsity levels, GSP competes with and outperforms all pruning techniques we compared with, dropping only $0.41, 0.8, 1.43$ and $2.2$ percentage points at $70, 80, 85$ and $90$ percent sparsity respectively.

\subsection{Single Shot Network Pruning}\label{singleshot}
GSP can also be utilized to generate sparse models by a single projection step instead of repeated projections throughout the training process. Thus, we can first take a pretrained model, project the model once layerwise with our choice of sparsity, and finally finetune the surviving connections. We refer to this approach as ``Single-Shot GSP''. This is in contrast to popular regularization-based techniques \rerevise{of} \citet{yang2019deephoyer} \rerevise{and}  \citet{ma2019transformed} which need to train with the regularizer first, then prune the weights and finally finetune the surviving weights to generate the final sparse model. The ability to project a pretrained model directly without going through a regularized training phase would be an attractive ability to have, e.g. in transfer learning.

Table~\ref{table:singleshot} shows the accuracy and sparsity value of induced GSP and single-shot GSP against DeepHoyer (DH) on the ResNet-56 and ResNet-110 \rerevise{architectures and tested} on the CIFAR-10 Dataset. \rerevise{The drop in accuracy compared to their respective baselines is reported as well}. Noticeably, single-shot GSP performs comparably or better than DH. \rerevise{Particularly, we notice that for the ResNet-56 architecture, single-shot GSP achieves a higher sparsity of 85.75\% with the same accuracy drop of 0.42\% compared to DH. For the ResNet-110 architechture, we find that even with a higher sparsity of 85.90\% we achieve lower accuracy drop compared to DH. Subsequently, even after pushing the single-shot GSP sparsity to 90.72\%, the accuracy drop was only 1.17\%, still lower than DH's 1.24\%  at 83.94\% sparsity. This result is even more salient as} DH relies on an extra sparsity-inducing regularizer phase with $164$ epochs of training to first induce sparsity, and then prunes the model with a threshold followed by finetuning for similar number of epochs. In contrast, we use a preset sparsity value for GSP, project a pretrained model once, and finetune the surviving weights.

\paragraph{Summary of the numerical experiments}  
We have shown that GSP in the NMF application performs competitively with state-of-the-art approaches on image data sets without the need to tune the sparsity parameters and outperforms them on synthetic and sparse data sets. In deep network pruning, the use of GSP leads to sparser and more accurate networks without extra training time needed to tune sparsity parameters.
On ImageNet data with ResNet50 model GSP outperforms a large number of existing approaches.
It also demonstrates superior performance when used for single-shot pruning.

\begin{table}[t]
\centering
\makebox[0.55 \textwidth][c]{
\resizebox{0.55 \textwidth}{!}{
\begin{tabular}{c|c|ccc}
\specialrule{.1em}{.05em}{.05em} 
Methods & Architecture & Sparsity\% & Accuracy\% & Drop \%  \\\specialrule{.1em}{.05em}{.05em}

Induced GSP & ResNet-56   & \revise{88.33}    & 92.04  & 1.09 \\\specialrule{0.1pt}{1pt}{1pt} 
DH          & ResNet-56    & \revise{84.64}     & 92.71  & 0.42  \\\specialrule{0.1pt}{1pt}{1pt}
\textbf{Single-shot GSP }& ResNet-56   & \revise{\textbf{85.78}}    & \textbf{92.71}  & \textbf{0.42} \\

\specialrule{0.1pt}{1pt}{1pt}
\specialrule{0.1pt}{1pt}{1pt} 

Induced GSP        & Resnet-110 & \revise{95.18}    & 92.67  & 0.95 \\\specialrule{0.1pt}{1pt}{1pt} 
DH         & ResNet-110    & \revise{83.94}     & 92.76  & 1.24  \\\specialrule{0.1pt}{1pt}{1pt} 
\multirowcell{2}{\textbf{Single-Shot GSP}}  
           & ResNet-110 & \revise{\textbf{85.90}}    & \textbf{92.86}  & \textbf{0.76} \\
           & ResNet-110 & \revise{\textbf{90.72}} & \textbf{92.45}  & \textbf{1.17} \\\specialrule{.1em}{.05em}{.05em}   

\end{tabular}
}
}
\caption{ Sparsity vs \rerevise{drop in} accuracy for ResNet-56 and ResNet-110 for single-shot and induced GSP (our algorithms) in comparison to DeepHoyer (DH) in the CIFAR-10 dataset. GSP with a single projection produces a model with comparable or better sparsity vs accuracy trade-off than a full pipeline of DeepHoyer's regularized training and finetuning. \rerevise{The best sparsity-accuracy trade-off compared to DH are reported in bold.} %\nic{We should include the table take-away in the caption -- also maybe bolding the best values? }
}
\label{table:singleshot}
\vspace{-14pt}
\end{table}

\section{Conclusion}

We proposed a novel grouped sparse projection algorithm \revise{which allows us to  project} a set of vectors to a set of sparse vectors whose average sparsity is controlled via a single interpretable parameter. \revise{This parameter should be carefully chosen by the user to obtain a good tradeoff between the sparsity of the projected vectors and how well they approximate the original input vectors; see, e.g., Table~\ref{table:randPrune} and Figure~\ref{FigImagenet} in the context of pruning deep neural networks.} 
We provided theoretical guarantees for the uniqueness of the solution of the dual problem which is also shown to correspond to the uniqueness of the primal solution.
Additionally, we proposed an efficient algorithm whose running time is shown to be linear in the size of the problem. We demonstrate the efficacy of the projection in two important learning applications, namely in pruning deep neural networks and sparse non-negative matrix factorization and validate it on a wide variety of datasets.
However, our projection is not limited to the listed applications and could also be used in dictionary learning, sparse PCA, and in different types of neural network architectures. We expect the proposed group sparse  projection operator to have wide applicability given the interpretability of the sparsity measure, practical efficiency of the projection, and strong theoretical guarantees. 
%\todo{rework last two sentences}

% We evaluate the GSP pruning technique in a range of modern deep networks such as VGG16, ResNet56, ResNet110 and ResNet50 on both the small-scale CIFAR-10 and the large-scale ImageNet datasets, and demonstrate it's competitiveness with similar techniques while giving us the option to finely control the sparsity of each layer. Besides using GSP during the training, we demonstrate that a single projection of a trained model results in performance comparable to the baseline and competitive with other similar methods in CIFAR10. 

\section*{Acknowledgement}

We thank the action editor and the three reviewers for their insightful comments that helped us improve the paper significantly.

This study was in part supported by NIH MH129047, NIH DA040487, NIH MH121885 and NSF 2112455.

Nicolas Gillis  acknowledges the support by the Fonds de la Recherche Scientifique - FNRS and the Fonds Wetenschappelijk Onderzoek - Vlanderen (FWO) under EOS Project no O005318F-RG47, and by the Francqui Foundation. 

Niccol\`o Dalmasso, Sameena Shah and Vamsi K. Potluru at the time of publishing are at JP Morgan AI Research. This paper was prepared for information purposes by the AI Research Group of JPMorgan Chase \& Co and its affiliates (“J.P. Morgan”), and is not a product of the Research Department of J.P. Morgan. J.P. Morgan makes no explicit or implied representation and warranty and accepts no liability, for the completeness, accuracy or reliability of information, or the legal, compliance, financial, tax or accounting effects of matters contained herein. This document is not intended as investment research or investment advice, or a recommendation, offer or solicitation for the purchase or sale of any security, financial instrument, financial product or service, or to be used in any way for evaluating the merits of participating in any transaction, and shall not constitute a solicitation under any jurisdiction or to any person, if such solicitation under such jurisdiction or to such person would be unlawful.

%An open question is how to combine backpropagation with sparsity constraints since they do not naturally commute.
%Our approach of sequentially combining gradient steps with projections while effective may not be ideal. 
%One immediate future direction would be to combine these to lead to faster convergence of the models. Also, another direction of research would be to successfully apply these sparse projections to convolutional and recurrent networks.

%%% Local Variables:
%%% mode: latex
%%% TeX-master: "main"
%%% End:

\bibliography{bibliography,proj}

@article{thom2015efficient,
  title={Efficient Dictionary Learning with Sparseness-Enforcing Projections},
  author={Thom, M. and Rapp, M. and Palm, G.},
  journal={International Journal of Computer Vision},
  volume={114},
  number={2-3},
  pages={168--194},
  year={2015},
  publisher={Springer}
}

@article{lee1999learning,
  title={Learning the parts of objects by non-negative matrix factorization},
  author={Lee, D.D. and Seung, H.S.},
  journal={Nature},
  volume={401},
  number={6755},
  pages={788},
  year={1999},
  publisher={Nature Publishing Group}
}

@article{gillis2012sparse,
  title={Sparse and unique nonnegative matrix factorization through data preprocessing},
  author={Gillis, N.},
  journal={Journal of Machine Learning Research},
  volume={13},
  number={Nov},
  pages={3349--3386},
  year={2012}
}

@article{guan2012nenmf,
  title={Ne{NMF}: An optimal gradient method for nonnegative matrix factorization},
  author={Guan, N. and Tao, D. and Luo, Z. and Yuan, B.},
  journal={IEEE Transactions on Signal Processing},
  volume={60},
  number={6},
  pages={2882--2898},
  year={2012},
  publisher={IEEE}
}

@book{gillis2020,
  title={Nonnegative Matrix Factorization},
  author={Gillis, N.},
  year={2020},
  publisher={SIAM, Philadelphia}
}

@article{aharon2006k,
  title={K-{SVD}: An algorithm for designing overcomplete dictionaries for sparse representation},
  author={Aharon, Michal and Elad, Michael and Bruckstein, Alfred},
  journal={IEEE Transactions on signal processing},
  volume={54},
  number={11},
  pages={4311--4322},
  year={2006},
  publisher={IEEE}
}

@article{gillis2012accelerated,
  title={Accelerated multiplicative updates and hierarchical {ALS} algorithms for nonnegative matrix factorization},
  author={Gillis, N. and Glineur, F.},
  journal={Neural computation},
  volume={24},
  number={4},
  pages={1085--1105},
  year={2012},
  publisher={MIT Press}
}

@article{beck2009fast,
  title={A fast iterative shrinkage-thresholding algorithm for linear inverse problems},
  author={Beck, A. and Teboulle, M.},
  journal={SIAM Journal on Imaging Sciences},
  volume={2},
  number={1},
  pages={183--202},
  year={2009} 
}

@article{journee2010generalized,
  title={Generalized power method for sparse principal component analysis},
  author={Journ{\'e}e, M. and Nesterov, Y. and Richt{\'a}rik, P. and Sepulchre, R.},
  journal={Journal of Machine Learning Research},
  volume={11},
  number={Feb},
  pages={517--553},
  year={2010}
}

@article{kim2007sparse,
  title={Sparse non-negative matrix factorizations via alternating non-negativity-constrained least squares for microarray data analysis},
  author={Kim, H. and Park, H.},
  journal={Bioinformatics},
  volume={23},
  number={12},
  pages={1495--1502},
  year={2007} 
}

@phdthesis{H08,
author = {Ho, N.-D.},
title = {Nonnegative Matrix Factorization - Algorithms and Applications}, 
school = {Universit\'{e} catholique de Louvain}, 
year = {2008},
}

@article{donoho2006compressed,
  title={Compressed sensing},
  author={Donoho, D.L.},
  journal={IEEE Transactions on Information Theory},
  volume={52},
  number={4},
  pages={1289--1306},
  year={2006},
  publisher={IEEE}
}

@article{candes2006robust,
  title={Robust uncertainty principles: Exact signal reconstruction from highly incomplete frequency information},
  author={Cand{\`e}s, E.J. and Romberg, J. and Tao, T.},
  journal={IEEE Transactions on Information Theory},
  volume={52},
  number={2},
  pages={489--509},
  year={2006},
  publisher={IEEE}
}

@article{dAE07,
 author = {d'Aspremont, A. and El Ghaoui, L. and Jordan, M.I. and Lanckriet, G.R.G.},
 journal = {SIAM Review},
 number = {3},
 pages = {434-448}, 
 title = {A Direct Formulation for Sparse {PCA} Using Semidefinite Programming},
 volume = {49},
 year = {2007}
}

@article{hoyer2004non,
  title={Non-negative matrix factorization with sparseness constraints},
  author={Hoyer, Patrik O},
  journal={Journal of machine learning research},
  volume={5},
  number={Nov},
  pages={1457--1469},
  year={2004}
}

@article{hurley2009comparing,
  title={Comparing measures of sparsity},
  author={Hurley, Niall and Rickard, Scott},
  journal={IEEE Transactions on Information Theory},
  volume={55},
  number={10},
  pages={4723--4741},
  year={2009},
  publisher={IEEE}
}

@inproceedings{lecun1990optimal,
  title={Optimal brain damage},
  author={LeCun, Yann and Denker, John S and Solla, Sara A},
  booktitle={Advances in neural information processing systems},
  pages={598--605},
  year={1990}
}

@inproceedings{hassibi1993optimal,
  title={Optimal brain surgeon and general network pruning},
  author={Hassibi, Babak and Stork, David G and Wolff, Gregory J},
  booktitle={IEEE international conference on neural networks},
  pages={293--299},
  year={1993},
  organization={IEEE}
}

@inproceedings{yun2019trimming,
  title={Trimming the $\ell_1 $ Regularizer: Statistical Analysis, Optimization, and Applications to Deep Learning},
  author={Yun, Jihun and Zheng, Peng and Yang, Eunho and Lozano, Aurelie and Aravkin, Aleksandr},
  booktitle={International Conference on Machine Learning},
  pages={7242--7251},
  year={2019},
  organization={PMLR}
}

@article{krizhevsky2009learning,
  title={Learning multiple layers of features from tiny images},
  author={Krizhevsky, Alex and Hinton, Geoffrey and others},
  year={2009},
  publisher={Citeseer}
}

@inproceedings{frankle2018lottery,
  author    = {Jonathan Frankle and
               Michael Carbin},
  title     = {The Lottery Ticket Hypothesis: Finding Sparse, Trainable Neural Networks},
  booktitle = {7th International Conference on Learning Representations, {ICLR} 2019,
               New Orleans, LA, USA, May 6-9, 2019},
  year      = {2019}
}

@inproceedings{zhu2017prune,
  author    = {Michael Zhu and
               Suyog Gupta},
  title     = {To Prune, or Not to Prune: Exploring the Efficacy of Pruning for Model
               Compression},
  booktitle = {6th International Conference on Learning Representations, {ICLR} 2018,
               Vancouver, BC, Canada, April 30 - May 3, 2018, Workshop Track Proceedings},
  year      = {2018}
}

@article{kummer1988newton,
  title={Newton's method for non-differentiable functions},
  author={Kummer, Bernd and others},
  journal={Advances in mathematical optimization},
  volume={45},
  number={1988},
  pages={114--125},
  year={1988},
  publisher={Akademie-Verlag Berlin}
}

@inproceedings{yang2019deephoyer,
  author    = {Huanrui Yang and
               Wei Wen and
               Hai Li},
  title     = {DeepHoyer: Learning Sparser Neural Network with Differentiable Scale-Invariant
               Sparsity Measures},
  booktitle = {8th International Conference on Learning Representations, {ICLR} 2020,
               Addis Ababa, Ethiopia, April 26-30, 2020},
  year      = {2020}
}

@inproceedings{potluru2013block,
  author    = {Vamsi K. Potluru and
               Sergey M. Plis and
               Jonathan Le Roux and
               Barak A. Pearlmutter and
               Vince D. Calhoun and
               Thomas P. Hayes},
  editor    = {Yoshua Bengio and
               Yann LeCun},
  title     = {Block Coordinate Descent for Sparse {NMF}},
  booktitle = {1st International Conference on Learning Representations, {ICLR} 2013,
               Scottsdale, Arizona, USA, May 2-4, 2013, Conference Track Proceedings},
  year      = {2013}
}

@inproceedings{han2015deep,
  author    = {Song Han and
               Huizi Mao and
               William J. Dally},
  editor    = {Yoshua Bengio and
               Yann LeCun},
  title     = {Deep Compression: Compressing Deep Neural Network with Pruning, Trained
               Quantization and Huffman Coding},
  booktitle = {4th International Conference on Learning Representations, {ICLR} 2016,
               San Juan, Puerto Rico, May 2-4, 2016, Conference Track Proceedings},
  year      = {2016}
}

@article{han2015learning,
  title={Learning both weights and connections for efficient neural networks},
  author={Han, Song and Pool, Jeff and Tran, John and Dally, William J},
  journal={arXiv preprint arXiv:1506.02626},
  year={2015}
}

@inproceedings{guo2016dynamic,
author = {Guo, Yiwen and Yao, Anbang and Chen, Yurong},
title = {Dynamic Network Surgery for Efficient {DNNs}},
year = {2016},
isbn = {9781510838819},
publisher = {Curran Associates Inc.},
address = {Red Hook, NY, USA},
booktitle = {Proceedings of the 30th International Conference on Neural Information Processing Systems},
pages = {1387–1395},
numpages = {9},
series = {NIPS'16}
}

@inproceedings{louizos2017learning,
  author    = {Christos Louizos and
               Max Welling and
               Diederik P. Kingma},
  title     = {Learning Sparse Neural Networks through $\ell_0$ Regularization},
  booktitle = {6th International Conference on Learning Representations, {ICLR} 2018,
               Vancouver, BC, Canada, April 30 - May 3, 2018, Conference Track Proceedings},
  year      = {2018}
}

@inproceedings{he2016deep,
  title={Deep residual learning for image recognition},
  author={He, Kaiming and Zhang, Xiangyu and Ren, Shaoqing and Sun, Jian},
  booktitle={Proceedings of the IEEE conference on computer vision and pattern recognition},
  pages={770--778},
  year={2016}
}

@inproceedings{lee2018snip,
  author    = {Namhoon Lee and
               Thalaiyasingam Ajanthan and
               Philip H. S. Torr},
  title     = {{Snip}: {S}ingle-shot network pruning based on connection sensitivity},
  booktitle = {7th International Conference on Learning Representations, {ICLR} 2019,
               New Orleans, LA, USA, May 6-9, 2019},
  year      = {2019}
}

@inproceedings{wang2020picking,
  author    = {Chaoqi Wang and
               Guodong Zhang and
               Roger B. Grosse},
  title     = {Picking Winning Tickets Before Training by Preserving Gradient Flow},
  booktitle = {8th International Conference on Learning Representations, {ICLR} 2020,
               Addis Ababa, Ethiopia, April 26-30, 2020},
  year      = {2020}
}

@article{ma2019transformed,
  title={Transformed $\ell_1$ regularization for learning sparse deep neural networks},
  author={Ma, Rongrong and Miao, Jianyu and Niu, Lingfeng and Zhang, Peng},
  journal={Neural Networks},
  volume={119},
  pages={286--298},
  year={2019},
  publisher={Elsevier}
}

@article{tibshirani1996regression,
  title={Regression shrinkage and selection via the lasso},
  author={Tibshirani, Robert},
  journal={Journal of the Royal Statistical Society: Series B (Methodological)},
  volume={58},
  number={1},
  pages={267--288},
  year={1996},
  publisher={Wiley Online Library}
}

@article{donoho1995noising,
  title={De-noising by soft-thresholding},
  author={Donoho, David L},
  journal={IEEE transactions on information theory},
  volume={41},
  number={3},
  pages={613--627},
  year={1995},
  publisher={IEEE}
}

@article{hyvarinen1999sparse,
  title={Sparse code shrinkage: Denoising of nongaussian data by maximum likelihood estimation},
  author={Hyv{\"a}rinen, Aapo},
  journal={Neural computation},
  volume={11},
  number={7},
  pages={1739--1768},
  year={1999},
  publisher={MIT Press One Rogers Street, Cambridge, MA 02142-1209, USA journals-info~…}
}

@article{elad2006simple,
  title={Why simple shrinkage is still relevant for redundant representations?},
  author={Elad, Michael},
  journal={IEEE transactions on information theory},
  volume={52},
  number={12},
  pages={5559--5569},
  year={2006},
  publisher={IEEE}
}

@article{janowsky1989pruning,
  title={Pruning versus clipping in neural networks},
  author={Janowsky, Steven A},
  journal={Physical Review A},
  volume={39},
  number={12},
  pages={6600},
  year={1989},
  publisher={APS}
}

@article{reed1993pruning,
  title={Pruning algorithms-a survey},
  author={Reed, Russell},
  journal={IEEE transactions on Neural Networks},
  volume={4},
  number={5},
  pages={740--747},
  year={1993},
  publisher={IEEE}
}

@inproceedings{elsen2020fast,
  title={Fast sparse convnets},
  author={Elsen, Erich and Dukhan, Marat and Gale, Trevor and Simonyan, Karen},
  booktitle={Proceedings of the IEEE/CVF conference on computer vision and pattern recognition},
  pages={14629--14638},
  year={2020}
}

@misc{cerebras,
  author = {Cerebras},
  title = {{Wafer Scale Engine: Why We Need Big Chips for Deep Learning}},
  howpublished = "\url{https://cerebras.net/blog/cerebras-wafer-scale-engine-why-we-need-big-chips-for-deep-learning/}",
  year = {2019}
}

@misc{nvidia_ampere,
  author = { Pool, Jeff and Sawarkar, Abhishek and Rodge, Jay},
  title = {{Accelerating Inference with Sparsity Using the NVIDIA Ampere Architecture and NVIDIA TensorRT}},
  howpublished = "\url{https://developer.nvidia.com/blog/accelerating-inference-with-sparsity-using-ampere-and-tensorrt/}",
  year = {2021}
}

@article{ILSVRC15,
Author = {Olga Russakovsky and Jia Deng and Hao Su and Jonathan Krause and Sanjeev Satheesh and Sean Ma and Zhiheng Huang and Andrej Karpathy and Aditya Khosla and Michael Bernstein and Alexander C. Berg and Li Fei-Fei},
Title = {{ImageNet Large Scale Visual Recognition Challenge}},
Year = {2015},
journal   = {International Journal of Computer Vision (IJCV)},
doi = {10.1007/s11263-015-0816-y},
volume={115},
number={3},
pages={211-252}
}

@inproceedings{luo2017thinet,
  title={Thinet: A filter level pruning method for deep neural network compression},
  author={Luo, Jian-Hao and Wu, Jianxin and Lin, Weiyao},
  booktitle={Proceedings of the IEEE international conference on computer vision},
  pages={5058--5066},
  year={2017}
}

@inproceedings{tanaka2020pruning,
  author    = {Hidenori Tanaka and
               Daniel Kunin and
               Daniel L. Yamins and
               Surya Ganguli},
  editor    = {Hugo Larochelle and
               Marc'Aurelio Ranzato and
               Raia Hadsell and
               Maria{-}Florina Balcan and
               Hsuan{-}Tien Lin},
  title     = {Pruning neural networks without any data by iteratively conserving
               synaptic flow},
  booktitle = {Advances in Neural Information Processing Systems 33: Annual Conference
               on Neural Information Processing Systems 2020, NeurIPS 2020, December
               6-12, 2020, virtual},
  year      = {2020}
}

@inproceedings{yu2018nisp,
  title={Nisp: Pruning networks using neuron importance score propagation},
  author={Yu, Ruichi and Li, Ang and Chen, Chun-Fu and Lai, Jui-Hsin and Morariu, Vlad I and Han, Xintong and Gao, Mingfei and Lin, Ching-Yung and Davis, Larry S},
  booktitle={Proceedings of the IEEE Conference on Computer Vision and Pattern Recognition},
  pages={9194--9203},
  year={2018}
}

@article{zhuang2018discrimination,
  title={Discrimination-aware channel pruning for deep neural networks},
  author={Zhuang, Zhuangwei and Tan, Mingkui and Zhuang, Bohan and Liu, Jing and Guo, Yong and Wu, Qingyao and Huang, Junzhou and Zhu, Jinhui},
  journal={arXiv preprint arXiv:1810.11809},
  year={2018}
}

@article{renda2020comparing,
  title={Comparing rewinding and fine-tuning in neural network pruning},
  author={Renda, Alex and Frankle, Jonathan and Carbin, Michael},
  journal={arXiv preprint arXiv:2003.02389},
  year={2020}
}

@inproceedings{chen2021lottery,
  title={The lottery tickets hypothesis for supervised and self-supervised pre-training in computer vision models},
  author={Chen, Tianlong and Frankle, Jonathan and Chang, Shiyu and Liu, Sijia and Zhang, Yang and Carbin, Michael and Wang, Zhangyang},
  booktitle={Proceedings of the IEEE/CVF Conference on Computer Vision and Pattern Recognition},
  pages={16306--16316},
  year={2021}
}

@article{chen2020lottery,
  title={The lottery ticket hypothesis for pre-trained bert networks},
  author={Chen, Tianlong and Frankle, Jonathan and Chang, Shiyu and Liu, Sijia and Zhang, Yang and Wang, Zhangyang and Carbin, Michael},
  journal={Advances in neural information processing systems},
  volume={33},
  pages={15834--15846},
  year={2020}
}

@article{arnob2021single,
  title={Single-shot pruning for offline reinforcement learning},
  author={Arnob, Samin Yeasar and Ohib, Riyasat and Plis, Sergey and Precup, Doina},
  journal={arXiv preprint arXiv:2112.15579},
  year={2021}
}

@article{sokar2021dynamic,
  title={Dynamic sparse training for deep reinforcement learning},
  author={Sokar, Ghada and Mocanu, Elena and Mocanu, Decebal Constantin and Pechenizkiy, Mykola and Stone, Peter},
  journal={arXiv preprint arXiv:2106.04217},
  year={2021}
}

@article{gafni1984two,
  title={Two-metric projection methods for constrained optimization},
  author={Gafni, Eli M and Bertsekas, Dimitri P},
  journal={SIAM Journal on Control and Optimization},
  volume={22},
  number={6},
  pages={936--964},
  year={1984},
  publisher={SIAM}
}

@inproceedings{duchi2008efficient,
  title={Efficient projections onto the $\ell_1$-ball for learning in high dimensions},
  author={Duchi, John and Shalev-Shwartz, Shai and Singer, Yoram and Chandra, Tushar},
  booktitle={Proceedings of the 25th international conference on Machine learning},
  pages={272--279},
  year={2008}
}

@article{liu2019unified,
  title={A unified approach for projections onto the intersection of $\ell_1 $ and $\ell_2 $ balls or spheres},
  author={Liu, Hongying and Wang, Hao and Song, Mengmeng},
  journal={arXiv preprint arXiv:1911.03946},
  year={2019}
}

@article{condat2016fast,
  title={Fast projection onto the simplex and the $\ell_1$ ball},
  author={Condat, Laurent},
  journal={Mathematical Programming},
  volume={158},
  number={1},
  pages={575--585},
  year={2016},
  publisher={Springer}
}

@inproceedings{yu2012efficient,
  author    = {Adams Wei Yu and
               Hao Su and
               Li Fei{-}Fei},
  title     = {Efficient Euclidean Projections onto the Intersection of Norm Balls},
  booktitle = {Proceedings of the 29th International Conference on Machine Learning,
               {ICML} 2012, Edinburgh, Scotland, UK, June 26 - July 1, 2012},
  publisher = {icml.cc / Omnipress},
  year      = {2012}
}

@article{bolte2014proximall0,
  author    = {J{\'{e}}r{\^{o}}me Bolte and
               Shoham Sabach and
               Marc Teboulle},
  title     = {Proximal alternating linearized minimization for nonconvex and nonsmooth
               problems},
  journal   = {Math. Program.},
  volume    = {146},
  number    = {1-2},
  pages     = {459--494},
  year      = {2014},
  doi       = {10.1007/s10107-013-0701-9}
}

@article{pock2016ipalm,
  author    = {Thomas Pock and
               Shoham Sabach},
  title     = {Inertial Proximal Alternating Linearized Minimization (iPALM) for
               Nonconvex and Nonsmooth Problems},
  journal   = {{SIAM} J. Imaging Sci.},
  volume    = {9},
  number    = {4},
  pages     = {1756--1787},
  year      = {2016},
  doi       = {10.1137/16M1064064}
}
\bibliographystyle{tmlr}

% \bibliography{bibliography}

% ==================================================================================
% ================================ SUPPLEMENTARY =================================
% ==================================================================================
% \newpage
% \onecolumn
%\aistatstitle{Supplementary Materials for Grouped Sparse Projection: Controllable Sparsity for Neural Networks and Non-negative Matrix Factorization}
% \raggedbottom
% % =============================================================================================
% % ==================================== HOYER SPS MEASURE ======================================
% % =============================================================================================
\newpage
\begin{appendices}
\section{Hoyer Sparsity Measure and Grouped Sparse Projection}
\subsection{The Hoyer Sparsity Measure} \label{apphoyer-sps}

In section~\ref{hoyer-sps} we discussed the \textit{Hoyer sparsity} measure \citep{hoyer2004non} for a given vector $x \in \mathbb{R}^n$ and for $x \neq 0$ \revise{and $n>1$}, we defined the sparsity of $x$ as

\begin{equation} \label{apphoyerSPSeq} 
\spar(x) = \frac{\sqrt{n} - \frac{\norm{x}_1}{\norm{x}_2}}{\sqrt{n} - 1} \; \in \; [0,1],
\end{equation}

%which we will refer in this paper to as the sparsity of $x$.
The above measure has the following properties, which we list here for completeness:
\begin{itemize}
    \item We have that $\spar(x) = 0 \iff \norm{x}_1 = \sqrt{n}\norm{x}_2 \iff x(i) = b$ for all $i$ and for some constant $b$, while $\spar(x) = 1 \iff \norm{x}_0 = 1$, where $\norm{x}_0$ counts the number of nonzero entries of $x$.
    \item One of the main advantage of $\spar(x)$ compared to $\norm{x}_0$ is that $\spar(x)$ is smooth \revise{over $\mathbb{R}^n$ apart from a measure-zero set, i.e., when $x=0$ or $x(i)=0$ for any $i$}. For example, with this measure, the vector $[1, 10^{-6}, 10^{-6}]$ is sparser than $[1, 1, 0]$, which makes sense numerically as $[1, 10^{-6}, 10^{-6}]$ is very close to the 1-sparse vector $[1, 0, 0]$. 
    \item $\spar(x)$ is invariant to scaling, that is, $\spar(x) = \spar(\alpha x)$ for any $\alpha \neq 0$.
    \item Note that for any two vectors $w$ and $z$, $\spar(w) \leq \spar(z) \iff \frac{\norm{w}_1}{\norm{w}_2}   \geq                       \frac{\norm{z}_1}{\norm{z}_2}$. 
    \item Note also that $\spar(x)$ is not defined at 0, nor for $n=1$. 
    \item It is non-increasing under the soft thresholding operator: Given a vector $x$ and a parameter $\lambda \geq 0$, it is defined as $\st(x,\lambda) = \sign(x) \circ [|x| - \lambda \revise{\uveci}]_+$  where $\circ$ is the component-wise multiplication, and $[.]_+$ is the projection onto the non-negative orthant, that is, $\max(0,.)$.
\end{itemize}

% A property that will be useful later is its nonincreasingness under the so-called soft thresholding operator: 
% Given a vector $x$ and a parameter $\lambda \geq 0$, it is defined as 

% \begin{equation*}
% \st(x,\lambda) = \sign(x) \circ [|x| - \lambda e]_+ ,    
% \end{equation*}

% where $\circ$ is the component-wise multiplication, and $[.]_+$ is the projection onto the nonnegative orthant, that is, $\max(0,.)$.

% We have the following result (see Lemma~\ref{lemmaWGSP} in the supplementary for a proof of a more general result):

% \textcolor{red}{We can move the following to supplementary, however, the comparison to Thom et al is needed as it does something similar, so we need to compare and mention the differences.}

% % ================================================================================================
% % ============================ GSP FORMULATION COMPARISON ========================================
% % ================================================================================================

\subsection{Comparison to typical formulations} \label{appFormCompare}

We discuss how the GSP formulation compares with two typical approaches to sparsify a set of vectors: 

\begin{itemize}
\item The most popular method to make a set of vectors sparse is arguably to use $\ell_1$ penalty terms, solving  

\begin{equation*} 
    \min_{x_i \in \mathbb{R}^{n_i}_0, 1 \leq i \leq r}  \; \sum_{i=1}^r \frac{1}{2} \norm{c_i - x_i}_2^2  + \lambda_i \norm{x_i}_1, 
\end{equation*} 

for which the solution is given by the soft thresholding operator, $x_i^* = \st(c_i, \lambda_i)$ $1 \leq i \leq r$. This is widely used in algorithms for compressed sensing and for solving inverse problems 
(in particular, to find sparse solutions to an underdetermined linear system); 
see, e.g.,~\cite{beck2009fast}. 
The use of $\ell_1$ penalty to obtain sparse factors in low-rank matrix approximations is also arguably the most popular approach; 
see, e.g.,~\cite{kim2007sparse,journee2010generalized}. 

% The two main drawbacks of this approach are that 
The main drawback of this approach is that the parameters $\lambda_i$ needs to be tuned to obtain a desired sparsity level,
% and (2)~it introduces a bias in the solution since $\tilde{x}_i^*$ could be rescaled to match $x_i$ better while having the same sparsity. 
As we noted, our projection resolves this drawback.

\item Using the method of~\cite{thom2015efficient} that projects 
a vector onto a vector with a desired level of sparsity, that is, solves \eqref{r1GSP},  
we could either project each vector $c_i$ independently but then we would have to choose a priori the sparsity level of each projection $x_i$. We could also project the single vector $[c_1; c_2; \dots; c_r] \in \mathbb{R}^{\small \sum_{i=1}^r n_i}$, stacking the $c_i$'s on top of one another. However, some vectors $c_i$ could be projected onto zero, which is not desirable in some applications; 
for example, in low-rank matrix approximation problems this would make the matrix with $x_i$'s as its columns rank deficient. More pertinently, in the scenario of learning sparse deep models, stacking all the layers together and projecting them this way would result in the projection of all the parameters of a layer onto zero, also known as \textit{layer-collapse} \citep{tanaka2020pruning}, which makes the model untrainable.

% \textcolor{red}{\textit{add later:} the most important difference is that -- method from Thom et al cannot be used in Deep Learning, as it will lead to whole layers of Neural Network to become completely pruned, resulting in layer collapse (cite synflow)}

\end{itemize}

% % =========================================================================================================
% % ================================ Properties of g(u) =============================================
% % =========================================================================================================
\subsection{Properties of \texorpdfstring{$g(\mu)$}{g(u)} and edge cases} \label{appGMU}

In section~\ref{algogsp} we discussed that the optimization problem to be solved to compute $\ell(\mu)$ in \eqref{GSP5} is separable in the variables $x_i's$, and that it can be solved individually for each $x_i$. We denoted the optimal solution of the \eqref{GSP5} with $x_i\revise{[\mu]}$ and noted that there are two possible cases, depending on the value of $\mu$. We noted that unless the solution for $\mu=0$  is feasible (which can be checked easily), 
we need to find the value of $\mu$ such that
\[
g(\mu) \; = \;  \sum_{i=1}^r \beta_i \revise{\uveci}^T x_i\revise{[\mu]}  -  k_s \; = \; 0, 
\]
that is, we need to find a root of $g(\mu)$ (if $\mu > \tilde{\mu}$ then the $x_i\revise{[\mu]}$ are all 1-sparse).

% We denoted $\tilde{\mu}$ the smallest value of $\mu$ such that $x_i\revise{[\mu]}$ are all 1-sparse so that $e^T x_i\revise{[\mu]} = 1$ hence
% for all $\mu \geq \tilde{\mu}$ we have
% \begin{align}
% \nonumber
%     g(\tilde{\mu}) &= \sum_{i=1}^r \beta_i - k_s  
%     = \sum_{i=1}^r \frac{1}{\sqrt{n_i}-1} - \sum_{i=1}^r \frac{\sqrt{n_i}}{\sqrt{n_i}-1} + rs \\
%     &= r (s-1) \leq 0.     
% \end{align} 
% We note from section~\ref{algogsp} that
% the value $\tilde{\mu}$ is given by the second largest entry among the vectors $|c_i|/\beta_i$'s and if the largest and second largest entry of a vector $|c_i|$ are equal, then $g(\mu)$ is discontinuous.
% then $g(\mu)$ is discontinuous
% In fact, if $\mu$ is larger than the second largest entry of $|x_i|$, 
% then $\tilde{x_i}(\mu)$ is 1-sparse.
% If the largest and second largest entry of a vector $|x_i|$ are equal, 
% then $g(\mu)$ is discontinuous. 
% This is an unavoidable issue when one wants to make a vector sparse: if the largest entries are equal to one another, one has to decide which one will be set to zero. 
% For example, the vectors $[1,0]$ and $[0,1]$ are equally good 1-sparse vectors to approximate $[1,1]$. 
% For $0\leq \mu \leq \tilde{\mu}$, the function $g(\mu)$ is strictly decreasing because the soft tresholding operator increases the sparsity function $\spar(.)$ (Lemma~\ref{lem1}). This implies that if $g(\mu)$ is not discontinuous around $g(\mu) = 0$, it has a single root, which we denoted $\mu^*$.

In section-\ref{algogsp} we explored possible ways to find this root of $g(\mu)$, including using bisection and Newton's method. The reason to use bisection is mostly to avoid points of non-differentiability and local stagnation of Newton's method because of discontinuity. The discontinuous points can be pre-computed and corresponds to the largest entries of the $c_i$'s when they are not uniquely attained. We have denoted $\mathcal{D}$ the set of discontinuous points of $g(\mu)$ in Algorithm~\ref{algogsp}: if a discontinuous point is encountered, the algorithm returns $\mu$ such that $g(\mu+\epsilon \mu) < 0 < g(\mu-\epsilon \mu)$.  Note that the accuracy $\epsilon$ does not need to be high in practice, say 0.001, since there will not be a significant difference between a vector of sparsity $s$ and sparsity $s\pm 0.001$.

Finally, we also provide a fast vectorized GPU-compatible implementation of the Algorithm~\ref{GSPalgo} in PyTorch along with the supplementary materials, where we parallelly project all the vectors together for application in deep neural networks. Since, the Newton's method converges quickly (see Table~\ref{tablenumexp}), empirically we observed a fast execution of the the projection of deep networks.

% % ===================================== Computational COST ================================================
% % =========================================================================================================
% \section{Details on the GSP Algorithm} \label{AppGSPdetails}
\subsection{Computational Cost} \label{AppCompCost}

% \paragraph{Computational cost}
As detailed in Section~\ref{sec:compcost}, the computational cost of the algorithm is $\mathcal{O}(tN)$, where $N = \sum_{i=1}^r n_i$ and $t$ is the number of iterations in Newton's method. The computational cost per iteration is therefore linear in the size of the problem. 
In practice, we observed that Netwon's method converge very fast and does not require much iterations, as showcased by the following synthetic data example. 
Let us take $n_i = 1000$ for all $i \revise{\in \llbracket 1,100  \rrbracket}$ and generate each entry of the $c_i$'s using the normal distribution $N(0,1)$. 
For each sparsity level, 
we generate 100 such data points and Table~\ref{tablenumexp} reports the average and maximum 
number of iterations needed by Algorithm~\ref{GSPalgo}. In all cases, it requires less than 4 iterations for a target accuracy of $\epsilon = 10^{-4}$.

% ====================================== Table ======================================

\begin{table}[htp]
\centering
\makebox[0.60 \textwidth][c]{
\resizebox{0.60 \textwidth}{!}{
%The table displays the final error $||A-X||_F$ obtained by the matrix $X$ generated by each algorithm and, in brackets, the number of iterations performed. Bold indicates the error of the best solution found. }

\begin{tabular}{c|c|c|c|c|c|c}
    \specialrule{.1em}{.05em}{.05em} 
        & $s_0$ &  $s=0.7$ & $s=0.8$ & $s=0.9$ &  $s=0.95$ & $s=0.99$ \\\specialrule{.07em}{.05em}{.05em} 
    Average &  20.86\%   & 3.88  & 3.78 & 3.98 & 3.75 & 3.77  \\\cmidrule{1-7}
    Maximum     &  21.01\%   &  4 & 4 & 4 & 4 & 4 \\\specialrule{.1em}{.05em}{.05em} 
\end{tabular}
}
}
\caption{Average and maximum number of iteration for Algorithm~\ref{GSPalgo} 
to perform grouped sparse projection of 100 randomly generated vectors of length 1000, with precision $\epsilon=10^{-4}$. The column $s_0$ gives the average and maximum initial sparsity of the 100 randomly generated vectors $x_i$'s. 
% over 100 randomly generated matrices $B$.
 %The best result is highlighted in bold.
\label{tablenumexp}}
\end{table}

\section{Grouped Sparse Projection} \label{app:gsp}
In this section we detail the calculations in the proof of Corollary~\ref{cor:unique_proj}.

\begin{proof}
If $\mu=0$ and $\mu > \tilde{\mu}$, the projection is unique provided the largest and second entry of each $|c_i|$ are not equal (see Section~\ref{sec:char_opt_sol}). For $0 < \mu < \tilde{\mu}$, since the optimal $\mu^*$ is unique it suffices to prove $x_i\revise{[\mu]}$ is a one-to-one mapping, i.e., that if $x_i\revise{[\mu_1]} = x_i\revise{[\mu_2]}$ then $\mu_1 = \mu_2$. As noted in Section~\ref{sec:lagrange_dual}, when $|c_i| - \mu \beta_i \revise{\uveci} \leq 0$, the solution $x_i\revise{[\mu]}$ is unique --- corresponding to the largest entry of $|c_i|$ --- as long as the largest entry is unique. 

We now analyse the last scenario, in which $|c_i| - \mu \beta_i \revise{\uveci} > 0$ (with $0 < \mu < \tilde{\mu}$). For simplicity of notation, we assume all indexes $j$ of the vector $|c_i|$ are active, i.e.,  $|c_i(j)| - \mu \beta_i > 0$ $\forall j$. The derivations would follow in an analogous way in the case of some indexes being zero-d out, with sums and norms only including the active indexes. Let's consider $\mu_1, \mu_2$ such that $0 < \mu_1, \mu_2 < \tilde{\mu}$, and consider the $j$-th index of $x_i\revise{[\mu_1]}$ and $x_i\revise{[\mu_2]}$ respectively.

\begin{align}
    & x_i\revise{[\mu_1]}(j) = x_i\revise{[\mu_2]}(j)  \nonumber \\
    & \implies \frac{|c_i(j)| - \mu_1 \beta_i}{\norm{|c_i| - \mu_1 \beta_i}_2} = \frac{|c_i(j)| - \mu_2 \beta_i}{\norm{|c_i| - \mu_2 \beta_i}_2} \nonumber \\
    & \implies (|c_i(j)| - \mu_1 \beta_i)^2 - (|c_i| - \mu_2 \beta_i)^T (|c_i| - \mu_2 \beta_i) = (|c_i(j)| - \mu_2 \beta_i)^2 (|c_i| - \mu_1 \beta_i)^T (|c_i| - \mu_1 \beta_1) \nonumber \\
    & \implies [c_i(j)^2 + \mu_1^2 \beta_i^2 - 2 |c_i(j)| \mu_1 \beta_i] \left(\norm{c_i}_2^2 + \mu_2^2 \beta_i^2 n - 2 \mu_2 \beta_i |c_i|^T e \right) = \nonumber\\
    & \hspace{2cm} [c_i(j)^2 + \mu_2^2 \beta_i^2 - 2 |c_i(j)| \mu_2 \beta_i] \left(\norm{c_i}_2^2 + \mu_1^2 \beta_i^2 n - 2 \mu_1 \beta_i |c_i|^T e \right) \nonumber \\
    & \implies c_i(j)^2 \beta_i^2 n (\mu_2^2 - \mu_1^2) + 2c_i(j)^2 \beta_i |c_i|^T e (\mu_1 - \mu_2) + \beta_i^2 \norm{c_i}_2^2 (\mu_1^2 - \mu_2^2) + 2 \mu_1 \mu_2 \beta_i^3 |c_i|^T e (\mu_2 - \mu_1) \nonumber \\
    & \hspace{2cm} +2|c_i(j)| \norm{c_i}_2^2 \beta_i (\mu_2 - \mu_1) + 2\mu_1\mu_2 |c_i(j)| \beta_i^3 n (\mu_1 - \mu_2) = 0 \nonumber \\
    & \implies (\mu_2^2 - \mu_1^2) \beta_i^2 \left[ c_i(j)^2 n - \norm{c_i}^2_2 \right] \\ 
    & \hspace{2cm} + 2\beta_i (\mu_1 - \mu_2) \left[ c_i(j)^2 \norm{c_i}_1 - \mu_1 \mu_2 \beta_i^2 \norm{c_i}_1 - |c_i(j)| \norm{c_i}_2^2 + \mu_1 \mu_2 |c_i(j)| \beta_i^2 n \right] = 0 \nonumber \\
    & \implies (\mu_2^2 - \mu_1^2) \beta_i^2 \left[ c_i(j)^2 n - \norm{c_i}^2_2 \right] + 2\mu_1\mu_2\beta_i^3 (\mu_1 - \mu_2) \left[|c_i(j)|n - \norm{c_i}_1\right] \\
    & \hspace{2cm} + 2\beta_i(\mu_1-\mu_2) \left[c_i(j)^2 \norm{c_i}_1 - |c_i(j)|\norm{c}_2^2 \right] = 0 \nonumber \\
    & \implies (\mu_1 - \mu_2) \left[ (\mu_1 + \mu_2) \beta_i^2 \underbrace{\left[ \norm{c_i}^2_2 - c_i(j)^2 n \right]}_{(A)} + 2\mu_1\mu_2\beta_i^3 \underbrace{\left[|c_i(j)|n - \norm{c_i}_1\right]}_{(B)} + 2\beta_i \underbrace{\left[c_i(j)^2 \norm{c_i}_1 - |c_i(j)|\norm{c}_2^2 \right]}_{(C)} \right] = 0. \nonumber 
\end{align}

For the equation above to be equal to zero, given $\mu_1, \mu_2, \beta_i >0$, either the terms $(A) = (B) = (C) =0$ or $\mu_1 = \mu_2$. Note that if $(A)$ and $(B)$ are equal to $0$ then $(C) = 0$, hence we just need to check when $(A) = (B) = 0$. We have that: 
\begin{equation*}
\begin{cases}
    (A) = 0 \\
    (B) = 0
    \end{cases}
    \implies
    \begin{cases}
    \norm{c_i}^2_2 - c_i(j)^2 n = 0 \\
    |c_i(j)|n - \norm{c_i}_1 = 0
    \end{cases}
    \implies
    \begin{cases}
    c_i(j) = \frac{\norm{c_i}_2}{\sqrt{n}} \\
    |c_i(j)| = \frac{\norm{c_i}_1}{n}.
    \end{cases}
\end{equation*} 
This implies the vector $c_i$ needs to have all entries equal to the same value in every active index $j$. As by assumption this cannot be case --- the largest entry of $c_i$ needs to be unique and distinct --- we have that $\mu_1 = \mu_2$ and we have proved that $\revise{[\mu]}$ is a one-to-one mapping.

\end{proof}

%======================================================================================================================
%=================================== WEIGHTED SPARSITY AND WEIGHTED GSP ===============================================
%======================================================================================================================

\section{Weighted Grouped Sparse Projection} \label{AppWGSPsparsity}

% In some settings, instead of constraining the sum of the entries of the $\bar{x_i}$'s in \eqref{GSP4} of the paper with the constraint $\sum_{i=1}^r \beta_i e^T \bar{x_i} \leq k$, one may want to minimize a weighted sum, that is, $\sum_{i=1}^r w_i^T \bar{x_i} \leq k$ for some $w_i \in \mathbb{R}^{n_i}_+$. 
% This amounts to replacing the $\ell_1$-norm in the sparsity measure $\spar(.)$ by a weighted $\ell_1$ norm.

\subsection{Weighted Sparsity} \label{appWGSP}
In section \ref{WGSP} we introduced the notion of \textit{weighted sparsity}:
For a vector $x \in \mathbb{R}^n_0$ and given $w \in \mathbb{R}^n_{0,+}$, as following

% \begin{equation} \label{appWSPmeas} 
% \spar_w(x) = \frac{ \norm{w}_2 - \frac{\norm{x}_w}{\norm{x}_2} }{\norm{w}_2 - \min_i w(i) } \; \in \; [0,1], \; \text{ where } \norm{x}_w = w^T |x|.
% \end{equation} 

\revise{
\begin{equation} \label{appWSPmeas} 
\spar_w(x) = \frac{ \norm{w}_2 - \frac{\norm{Wx}_1}{\norm{x}_2} }{\norm{w}_2 - \min_i w(i) } \; \in \; [0,1], \; \text{ where } W = {\rm diag}(w) \mathbf{1}.
\end{equation} 
}

% ============================= Observations about WGSP =============================
Let us make some observations about this quantity 
\begin{itemize} 
\item We have 
% \[
%     \norm{ w}_2  
%     \; = \; 
%     \max_{ \norm{y}_2 \leq 1 } \norm{y}_w
%     %\; = \;
%     %\max_{ ||y||_2 \leq 1 } w^T y 
%     \quad 
%     \text{ and }
%     \quad  
%     \min_i w(i) = \min_{ \norm{y}_2 = 1 } \norm{y}_w, 
% \]
\revise{
\[
    \norm{ w}_2  
    \; = \; 
    \max_{ \norm{y}_2 \leq 1 } \norm{W y}_1
    %\; = \;
    %\max_{ ||y||_2 \leq 1 } w^T y 
    \quad 
    \text{ and }
    \quad  
    \min_i w(i) = \min_{ \norm{y}_2 = 1 } \norm{Wy}_1, 
\]
}

which implies the fact that $\spar_w(x) \in [0,1]$ for any $x$.

\item For $w = \revise{\uveci}$, we have \revise{$\norm{W (\cdot)}_1 = \norm{(\cdot)}_1$, $\norm{w}_2 = \sqrt{n}$} and $\min_i w(i) = 1$ so that $\spar(.)=\spar_{\revise{\uveci}}(.)$.  

\item $\spar_w(x) = 1$ if and only if $x$ is 1-sparse and its non-zero entry corresponds to (one of) the smallest entry of $w$. 
Therefore, a vector $x$ can be 1-sparse but have a low weighted sparsity $\spar_w(x)$ if the corresponding entry of $w$ is large. 

\item \revise{$\norm{Wx}_1$} is a norm if and only if $w > 0$, which we do not require in this paper but use this notation for simplicity.  
\end{itemize}

This notion of weighted sparsity allows to give more or less importance to the entries of $x$ to measure its sparsity. 
For example, having $w(j) = 0$ means that there is no need for $x(j)$ to be sparse (that is, close or equal to zero) 
as it is not taken into account in $\spar_w(x)$, while taking $w(j)$ large will enforce $x(j)$ to be (close to) zero if $\spar_w(x)$ is large (that is, close to one). In the remainder of this paper, we will say that a vector $x$ is $w$-sparse if it has a large weighted sparsity $\spar_w(x)$. 
The following example provides a potential application of projecting a set of vectors onto $w$-sparse vectors.  

%This could be used to make some entries $\bar{x_i}$ sparser (or denser). 
%In particular, if some entries of $w_i$ are equal to zero, the corresponding entries of $x_i$ are not taken into account in the constraint hence will not be subject to sparsity.  %Similarly, giving larger weights to a set of entries of $x_i$ to make it sparser. 

% =====================================================================================================
% ==================================== WGSP FORMULATION ===============================================
% =====================================================================================================

\subsection{Weighted Group Sparse Projection} \label{AppWgspDerive}
We defined the $w$-sparse projection problem of the set of vectors $\{c_i \in \mathbb{R}^{n_i}\}_{i=1}^r$ in \eqref{WGSP4}, where
$\{w_i \in \mathbb{R}^{n_i}_{0,+}\}_{i=1}^r$ was the nonzero nonnegative weight vectors associated with the $c_i$'s with a target average weighted sparsity $s_w \in [0,1]$.
% we want to solve the following weighted GSP problem: 

% \begin{equation} \label{WGSP}
% \min_{\tilde{x}_i \in \mathbb{R}^{n_i}_0, 1 \leq i \leq r}  
% \; \sum_{i=1}^r ||x_i - \tilde{x_i}||_2  
% \quad 
% \text{ such that }  
% \quad 
% \frac{1}{r} \sum_{i=1}^r \spar_{w_i} \left( \tilde{x_i} \right) \geq s_w . \tag{WGSP}
% \end{equation} 
% Using exactly the same steps as for GSP, we can reformulate the problem as  

% \begin{equation} 
% \max_{\tilde{x}_i \in \mathbb{R}^{n_i}, 1 \leq i \leq r}  
% \; 
% \sum_{i=1}^r \bar{x_i}^T |x_i| 
% \quad 
% \text{ such that }  
% \quad \bar{x_i} \geq 0, 
% ||\bar{x_i}||_2 = 1 \, \forall i \; \text{ and } \; 
% \frac{1}{r} \sum_{i=1}^r \spar_{w_i} \left( \bar{x_i} \right) \geq s_w . \label{AppWGSP3} \tag{WGSP}
% \end{equation} 
% For the sparsity constraint, we have 
% \[ 
% \sum_{i=1}^r \spar_{w_i} \left( \bar{x_i} \right) 
%  = \sum_{i=1}^r  \frac{||w_i||_2 - ||\bar{x_i}||_w}{||w_i||_2 - \min_j w_i(j)} 
%  = \sum_{i=1}^r  \frac{||w_i||_2}{||w_i||_2 - \min_j w_i(j)} 
% 			- \sum_{i=1}^r \frac{w_i^T \bar{x_i}}{||w_i||_2 - \min_j w_i(j)} \geq r s_w. 
% \] 
% Denoting $k_s^w =  \frac{||w_i||_2}{||w_i||_2 - \min_j w_i(j)} - r s_w$ 
% and $\beta_i^w = \frac{1}{||w_i||_2 - \min_j w_i(j)}$,  
% \eqref{AppWGSP3} can be reformulated as follows 
% \begin{equation} 
% \max_{\tilde{x}_i \in \mathbb{R}^{n_i}, 1 \leq i \leq r}  
% \; 
% \sum_{i=1}^r \bar{x_i}^T |x_i| 
% \quad 
% \text{ such that }  
% \quad \bar{x_i} \geq 0, 
% ||\bar{x_i}||_2 = 1 \, \forall i \; \text{ and } \; 
%  \sum_{i=1}^r \beta_i^w w_i^T \bar{x_i} \leq k_s^w. \label{AppWGSP4}
% \end{equation} 

As for ~\ref{GSP4}, the maximum of ~\eqref{WGSP4} is attained. 
Similarly, as for ~\eqref{GSP4}, we can derive formula for $x_i$ depending on the Lagrange variable $\mu$ associated with the inequality constraint:

\begin{enumerate} 
\item For $|c_i| - \mu \beta_i^w w_i \nleq 0$, we have 
\[
x_i\revise{[\mu]} 
\; = \; 
\frac{  \big[|c_i| - \mu \beta_i^w w_i \big]_+ }{\Big\| \big[ |c_i| - \mu \beta_i^w w_i \big]_+ \Big\|_2 }. 
\] 

\item For $|c_i| - \mu \beta_i^w w_i \leq 0$, $x_i\revise{[\mu]}$ is 1-sparse. 
The nonzero entry of $x_i\revise{[\mu]}$ corresponds to the largest entry of $|c_i| - \mu \beta_i^w w_i$. In this case, since the entries of $w_i$ might be distinct, this non-zero entry does not necessarily correspond to the largest entry of $|c_i|$ and may change as $\mu$ increases. In particular, for $\mu$ large enough, the nonzero entry of $x_i\revise{[\mu]}$ will correspond to the smallest entry of $w_i$. 
\end{enumerate}

We need $\sum_{i=1}^r \beta_i^w w_i^T x_i\revise{[\mu]}$ to be smaller than $k_s^w$, that is, 
\[
g_w(\mu) = \sum_{i=1}^r \beta_i^w w_i^T x_i\revise{[\mu]} - k_s^w \leq 0. 
\] 
If $g_w(0) \leq 0$, then $x_i\revise{[0]} = \frac{|c_i|}{\norm{c_i}_2}$ is optimal and the problem is solved (the $c_i$'s have average $w$-sparsity larger than $s_w$).  
Otherwise,  the constraint will be active and we need to find a root $\mu^*$ of $g_w(\mu)$. 
Unfortunately, as opposed to GSP,  the function $g_w(\mu)$ is not necessarily strictly decreasing for $\mu$ sufficiently small:  
when $|c_i| - \mu \beta_i^w w_i \leq 0$ and as $\mu$ increases, the $w$-sparsity of $c_i$ may change abruptly as the index of the maximum entry of $|c_i| - \mu \beta_i^w w_i$ may change as $\mu$ increases. 
In that case, the term corresponding to $c_i$ in $g_w(\mu)$ is piece-wise constant. 
For example, let $c_i = [4, 1]$ and $w_i = [2, 1]$, we have 
\begin{itemize}

\item For  $\beta_i^w \mu \in [1, 2)$, 
$\spar_w( x_i\revise{[\mu]} ) = \spar_w( [1, 0] ) = \frac{\sqrt{5} - 2}{\sqrt{5} - 1}= 0.19$. 

\item For $\beta_i^w \mu \geq 2$, $\spar_w( x_i\revise{[\mu]} ) = \spar_w( [0, 1] ) = 1$. 

\end{itemize}

However, when $|c_i| - \mu \beta_i^w w_i \nleq 0$ for some $i$, $g_w(\mu)$ is strictly decreasing.

% ==========================================================================================
%. ============================== WGSP EXAMPLE =============================================
% ==========================================================================================
\subsubsection{Example Application on Facial Images}

Let us consider the case when the $x_i$'s represent a set of basis vectors for (vectorized) facial images. 
In this case, one may want these basis vectors to be sparser on the edges since edges are less likely to contain facial features; 
see, e.g., \cite{H08}. 
Let us illustrate this on the ORL face data set (400 facial images, each 112 by 92 pixels). 
Each column of the input data matrix $\mathrm{Y}$ represents a vectorized facial image, and is approximated by the product of $\mathrm{XH}$ where $\mathrm{X}$ and $\mathrm{H}$ are nonnegative. Each column of $W$ is a basis elements for the facial images. We first apply sparse NMF with average sparsity of 60\% for the columns of basis matrix $\mathrm{X}$. 
The basis elements are displayed on the left of Figure~\ref{figORLex}.

\begin{figure*} 
\begin{center}
\begin{tabular}{cc} 
\includegraphics[width=0.4\textwidth]{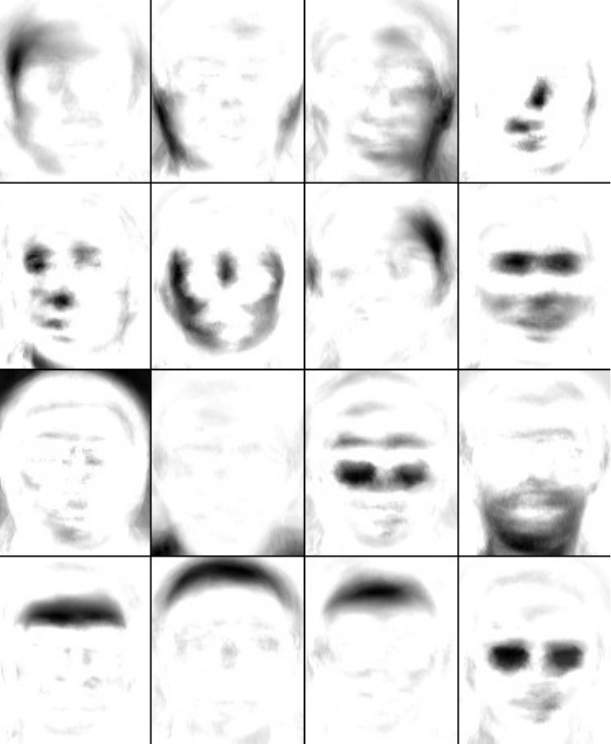} \quad 
& \quad \includegraphics[width=0.4\textwidth]{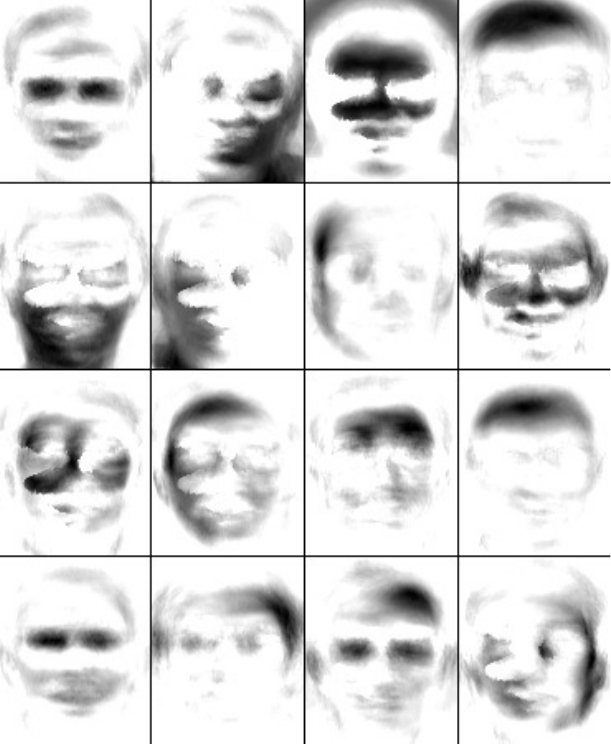}
\end{tabular} 
\caption{Basis elements obtained with sparse NMF (left) and weighted sparse NMF (right). \label{figORLex}}
\end{center}
\end{figure*}

Given a basis image 112 by 92 pixels, 
we define the weight of the pixel at position $(i,j)$ as 
$e^{\norm{(i,j) - (56.5,46.5)}_2/\sigma}$ with $\sigma = 5$ as in Chapter~6 of \citet{H08}. 
The further away from the middle of the image, 
the more weights we assign to the pixel so that the basis images are expected to be sparser on the edges. 
According to these weights, the average weighted sparsity of the sparse NMF solution is 89\% (in fact, we notice that most basis images are already relatively sparse on the edges). 
Then, we run weighted sparse NMF (WSNMF) with average weighted sparsity 95\%. The basis elements are displayed on the right of Figure~\ref{figORLex}. 
We observe that, as expected, the edges are in average even sparser compared to the unweighted case (note that the only WSNMF basis element that is not sparse on the edges, the third image, has darker pixels in the middle of the image to compensate for the relatively lighter pixels on the edges). 
This is confirmed by Figure~\ref{figORLex2} which displays the average of the columns of the squared error for both solutions, that is, it displays the average of the columns of the squared residual $(\mathrm{Y}-\mathrm{X}\mathrm{H})_{ij}^2$. 
The residual of WSNMF is brighter on the middle of the image (since the basis elements are less constrained to be sparse in this area) 
and darker at the corners. In fact, most of the error of WSNMF is concentrated in the four corners.

\begin{figure} 
\begin{center}
\includegraphics[width=0.4\textwidth]{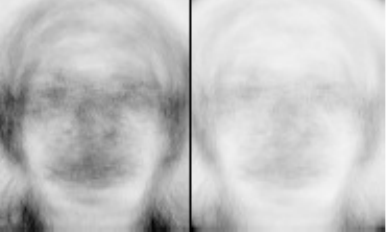}
\caption{Average squared error obtained with sparse NMF (left) 
and WSNMF (right) --the darker, the higher the error.    \label{figORLex2}}
\end{center}
\end{figure} 
Observe also that the basis elements of WSNMF are denser: in fact, the (unweighted) sparsity of WSNMF is only 50\%. 
The relative errors, 
that is, $\frac{\| \mathrm{Y}-\mathrm{X}\mathrm{H}\|_F}{\|\mathrm{Y}\|_F}$, 
for both approaches are similar, 20.34\% for sparse NMF and 20.77\% for WSNMF.

% \begin{example}
% Another interesting application is where we would update the weights depending on the $x_i$'s. 
% Again, assume the $x_i$'s represent a set of basis vector for (vectorized) facial images. 
% If we want to enhance the fact that these features are disjoint, 
% we may take $w_i = \sum_{j \neq i} |x_i|$ so that $x_i$ will be sparser where the other $x_i$'s are larger. 
% For such weights, $\spar(x_i)_w = 1$ implies that $x_i$ is nonzero only where the other $x_i$'s are the smallest. 
% Of course, this could be quite unstable.
% This is related to ONMF. 
% \end{example}

% =====================================================================================================================
% ========================================= LEMMA B.1 ==========================================================
% =====================================================================================================================

\subsection{Theoretical Results}  \label{sec:app_wsgp_theory}

We now provide theoretical proofs for the solutions of the problem \ref{WGSP}, as well as proof of Theorem~\ref{theo:unique_root} for \ref{GSP}. Lemma~\ref{lemmaWGSP} provides a proof for the solution of the dual problem for a general weighted sum of $x\revise{[\mu]}$, from which both results follow.

\begin{lemma} \label{lemmaWGSP}
Let $w \in \mathbb{R}^n_{0,+}$ and $x \in \mathbb{R}^n_+$. 
Let also $f_x(\gamma) = w^T x\revise{[\gamma]}$ where 
\begin{itemize}
\item If $x - \gamma w \nleq 0$, that is, if $\gamma < \tilde{\gamma} = \max_j \frac{x(j)}{w(j)}$, then  
\[
\bar{x}\revise{[\gamma]} = \frac{[x - \gamma w ]_+}{\| [x - \gamma w ]_+  \|_2}. 
\] 

\item Otherwise $x\revise{[\gamma]}$ is a 1-sparse vector, with its nonzero entry equal to one and at position $j \in \argmax_j x(j) - \revise{\gamma} w(j)$. 
\end{itemize}
For $0 \leq \gamma < \tilde{\gamma}$, $f_x(\gamma)$ is strictly decreasing, unless $x$ is a multiple of $w$ in which case it is constant.  
For $\gamma \geq \tilde{\gamma}$, $f_x(\gamma)$ is nonincreasing and piece-wise constant.  
\end{lemma}
\begin{proof}
The case $\gamma \geq \tilde{\gamma}$ is straightforward since $f_x(\gamma) = w_j$ for $j  \in \argmax_j \left(x(j) - \gamma w(j)\right)$: 
as $\gamma$ increases, the selected $w_j$ can only decrease since $w \geq 0$.   

Let us now consider the case $0 \leq \gamma < \tilde{\gamma}$. %and introduce the notation 
%\[
%\bar{x}(x,\gamma) = \frac{[x - \gamma w ]_+}{\| [x - \gamma w ]_+  \|_2}. 
%\] 
Clearly, $f_x(\gamma)$ is 
continuous since it is a linear function of $x\revise{[\gamma]}$ which is continuous, 
and it is differentiable everywhere except for $\gamma = \frac{x(j)}{w(j)}$ for some $j$. 
Therefore, it suffices to show that $f'(\gamma)$ is negative for all $\gamma \neq \frac{x(j)}{w(j)}$. 
Note that $f(\gamma)$ is strictly decreasing if and only if $c_1 f(\gamma c_2)$ is strictly decreasing for any constants $c_1,c_2 > 0$. Therefore, we may assume without loss of generality (w.l.o.g.) that $||w||_2 = 1$ (replacing $w$ by $w/||w||_2$). 
We may also assume w.l.o.g.\@ that $x > \gamma w$ 
otherwise we restrict the problem to $x(J)$ where $J(\gamma) = \{ j | x(j) - \gamma w(j) > 0 \}$ since $f$ depends only on the indices in $J$ in the case 
$\gamma <  \tilde{\gamma}$. 
Under these assumptions, we have 
\[
f(\gamma) = \frac{w^T (x - \gamma w)}{||x - \gamma w||_2} = \frac{ w^Tx - \gamma}{||x - \gamma w||_2}, 
\]
since $||w||_2^2 = w^T w = 1$, and  
\begin{align*}
 f'(\gamma) 
 & =  \frac{ - ||x - \gamma w||_2   +   (w^Tx - \gamma) \; w^T (x - \gamma w) ||x - \gamma w||_2^{-1} }{||x - \gamma w||_2^2} \\
  & =  \frac{ - ||x - \gamma w||_2^2 + (w^Tx - \gamma)^2 }{||x - \gamma w||_2^3}. 
\end{align*}
It remains to show that $||x - \gamma w||_2^2 \geq (w^Tx - \gamma)^2$ implying $f'(\gamma)  < 0$. 
We have 
\[
||x - \gamma w||_2^2 = ||x||_2^2 - 2 \gamma w^T x + \gamma^2, 
\]
and 
\[
(w^Tx - \gamma)^2 = (w^Tx)^2 - 2 \gamma w^T x + \gamma^2, 
\]
which gives the result since $|w^Tx| < ||w||_2 ||x||_2 = ||x||_2$ for $x$ not a multiple of $w$.  
%We observe that 
%\[
%\bar{x}( \bar{x}(\gamma_1) , \gamma_2 ) = \bar{x}(\gamma_1 + \gamma_2 \| [x - \gamma_1 w ]_+  \|_2). 
%\]
%In fact, 
%\begin{align*}
%\bar{x}( \bar{x}(\gamma_1) , \gamma_2 ) 
%& = 
%\frac{ \left[ \frac{[x - \gamma_1 w ]_+}{\| [x - \gamma_1 w ]_+  \|_2} - \gamma_2 w \right]_+ }
%{\| \left [\frac{[x - \gamma_1 w ]_+}{\| [x - \gamma_1 w ]_+  \|_2} - \gamma_2 w \right]_+  \|_2} \\
%& = 
%\frac{ \left[ [x - \gamma_1 w ]_+ - \gamma_2 \| [x - \gamma_1 w ]_+  \|_2 w \right]_+ }
%{\| \left[ [x - \gamma_1 w ]_+ - \gamma_2 w \| [x - \gamma_1 w ]_+  \|_2 \right]_+  \|_2}  \\ 
%& = 
%\frac{ [ x - \gamma_1 w - \gamma_2 \| [x - \gamma_1 w ]_+  \|_2 w ]_+ }
%{\| x - \gamma_1 w - \gamma_2 w \| [x - \gamma_1 w ]_+  \|_2 ]_+  \|_2}, 
%\end{align*}
%since $\gamma w \geq 0$. 
\end{proof}

% ===================================================================================================================
% =============================================== COROLLARY =========================================================
% ===================================================================================================================
\smallskip

\begin{proof}[Proof of Theorem 1]

The proof follows from Lemma~\ref{lemmaWGSP} by setting $w=\revise{\uveci}$ and noting that $g(\mu) = \sum_{i=1}^r f_{|c_i|}(\beta_i \mu) - k_s$. Note that since $w(j) = 1$ $\forall j$, the index of the maximum element remains the same.
\end{proof}

\begin{corollary} 
The function $g_w(\mu) = \sum_{i=1}^r \beta_i^w w_i^T x_i\revise{[\mu]} - k_s^w$ %where $\bar{x_i}(\mu)$ is 
as defined above is nonincreasing. 
Moreover, if $|c_i| - \mu \beta_i^w w_i \nleq 0$ for some $i$, it is strictly decreasing. 
\end{corollary}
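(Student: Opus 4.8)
The plan is to obtain the corollary as a near-immediate consequence of Lemma~\ref{lem2}, by recognizing each summand of $g_w$ as an instance of the function $f_x$ studied there. Concretely, for each $i$ set $x := |x_i| \in \mathbb{R}^{n_i}_+$, $w := w_i \in \mathbb{R}^{n_i}_{0,+}$, and substitute $\gamma := \beta_i^w \mu$. With this identification, the two-case description of $\bar{x_i}(\mu)$ given just above the corollary (regime $|x_i| - \mu\beta_i^w w_i \nleq 0$ versus $|x_i| - \mu\beta_i^w w_i \leq 0$) is exactly the two-case definition of $\bar{x}(\gamma)$ in Lemma~\ref{lem2}, with threshold $\tilde\gamma = \max_j |x_i|(j)/w_i(j)$. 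Hence $w_i^T \bar{x_i}(\mu) = f_{|x_i|}(\beta_i^w \mu)$ for every $\mu \geq 0$.

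Next I would record that $\beta_i^w = 1/(\|w_i\|_2 - \min_j w_i(j)) > 0$: since $w_i$ is a nonzero nonnegative vector of length $n_i \geq 2$, an elementary check gives $\|w_i\|_2 > \min_j w_i(j)$ (for instance $\|w_i\|_2^2 \geq \min_j w_i(j)\,\|w_i\|_1 \geq \min_j w_i(j)\,\|w_i\|_2$, with strict inequality unless $w_i$ is a positive constant vector, which is itself ruled out because then $\|w_i\|_2 = \sqrt{n_i}\,\min_j w_i(j) > \min_j w_i(j)$). Consequently $\mu \mapsto \beta_i^w\mu$ is strictly increasing, so $\mu \mapsto \beta_i^w w_i^T\bar{x_i}(\mu) = \beta_i^w f_{|x_i|}(\beta_i^w\mu)$ is nonincreasing by the nonincreasing/piece-wise-constant part of Lemma~\ref{lem2}. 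A finite sum of nonincreasing functions, minus the constant $k_s^w$, is nonincreasing, which proves the first assertion.

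For the second assertion, suppose $|x_i| - \mu\beta_i^w w_i \nleq 0$ for some index $i$. This is equivalent to $\beta_i^w\mu < \tilde\gamma_i = \max_j |x_i|(j)/w_i(j)$, i.e.\@ $\gamma = \beta_i^w\mu$ lies in the regime $[0,\tilde\gamma_i)$ for that $i$; Lemma~\ref{lem2} then says $f_{|x_i|}$ is strictly decreasing there, so the $i$-th summand strictly decreases while the remaining summands are nonincreasing, whence $g_w$ is strictly decreasing. The one point that needs care — and the only real obstacle — is the exceptional case in Lemma~\ref{lem2} where $|x_i|$ is a multiple of $w_i$: then $\bar{x_i}(\mu) = w_i/\|w_i\|_2$ is constant on $[0,\tilde\gamma_i)$ and $x_i$ cannot be made $w_i$-sparse at all, so such a component contributes only a constant to $g_w$ and does not force strict decrease. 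The clean way to state and prove the corollary is therefore to read its hypothesis as: there exists $i$ with $|x_i| - \mu\beta_i^w w_i \nleq 0$ and $|x_i|$ not collinear with $w_i$; under that reading the argument above is complete, and I would flag the collinear case explicitly as a degenerate (and practically irrelevant) exclusion.
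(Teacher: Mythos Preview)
Your proof is correct and follows the same route as the paper, which simply writes $g_w(\mu) = \sum_{i=1}^r f_{|x_i|}(\beta_i^w \mu) - k_s^w$ and appeals to Lemma~\ref{lem2}. Your version is in fact more careful: you keep the positive factor $\beta_i^w$ in front of each summand (the paper's one-line proof omits it, though this is harmless for monotonicity), and you rightly flag the degenerate collinear case $|x_i| \propto w_i$, which the corollary as stated glosses over.
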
 
\begin{proof}
This follows from Lemma~\ref{lemmaWGSP} since $g_w(\mu) = \sum_{i=1}^r f_{|c_i|}(\beta_i^w \mu) - k_s^w$. 
\end{proof}

Therefore, as opposed to $g(\mu)$, $g_w(\mu)$ could have an infinite number of roots $\mu^*$. 
However, the corresponding $x_i(\mu^*)$ is unique so that non-uniqueness of $\mu^*$ is irrelevant.  
Moreover, this situation is rather unlikely to happen in practice since it requires $|c_i| - \mu \beta_i^w w_i \leq 0$ for all $i$ at the root of $g_w(\mu)$ hence it requires $s_w$ to be close to one (that is, $k_s^w$ to be large).  
Similarly as for $g$, $g_w$ will be discontinuous at the points where $\max_{j} \frac{c(j)}{\beta_i w_i(j)}$ is not uniquely attained: 
as $\mu$ increases, the two (or more) last non-zero entries of $x_i\revise{[\mu]}$ become zero simultaneously. 
Finally, to solve WGSP, we can essentially use the same algorithm as for GSP, that is, we can easily adapt Algorithm~\ref{GSPalgo} %Algorithm~\ref{GSPalgo} 
to find a root of $g_w(\mu)$.

% ======================================================================================================================
% ================================================= SPARSE NMF =========================================================
% ======================================================================================================================
\section{Sparse NMF: Experiment Details} \label{appNMFsection}

In this section, we provide experiment parameters and details for the sparse NMF experiments.

\subsection{Synthetic data sets} \label{sec:syntdata}

% We first perform experiments on synthetic data sets. The main reason for this is that we observed an interesting and, from our point of view, rather unexpected behavior: 
% if PSNMF is given the sparsity of an exact factorization, it converges to an exact solution much faster than A-HALS and NeNMF. 
% In other words, PSNMF is able to use the prior information to its advantage. Although A-HALS and NeNMF are less constrained, they converge slower.

For the experiment on synthetic data, we take $m=n=100$ and $r=10$. We generate each entry of $X$ using the normal distribution with mean 0 and variance 1 and then set the negative entries to zero, so that $X$ will have sparsity around 50\%. We generate each entry of $H$ using the normal distribution in the interval $[0,1]$. %In Matlab, we use  $\texttt{X = max(0, randn(m,r)); H = rand(r,n)}$.  
To run gspNMF and cgspNMF, we compute the average sparsity of the columns of the true $X$ and use it as an input.  We generate 50 such matrices, and use the same initial matrices for all algorithms, which were generated using the uniform distribution for each entry. In Figure~\ref{gspNMFresults} of the paper, we reported the evolution of the average of the relative error obtained by the different algorithms among the 50 randomly generated matrices.  

\subsection{Image data set} \label{appNMFImage}
For this set of experiments we test on the widely used data set in the NMF literature, the CBCL facial images. This dataset consists 2429 images of $19 \times 19$ pixels, and was used in the seminal paper by ~\cite{lee1999learning} with $r=49$. We run the sparse NMF techniques with sparsity 85\% %--note that the sparsity of the solution obtained by NeNMF and A-HALS is around 70\%, hence it is expected that sparse NMF leads to higher approximation errors but will increase the sparsity by 15\%. 
Using 10 random initializations, Figure~\ref{gspNMFresults} (in paper) reports the evolution of the average relative error, and Figure~\ref{figCBCLall} displays the basis elements obtained by different methods.

\begin{figure*}[ht]
\includegraphics[width=0.47\linewidth]{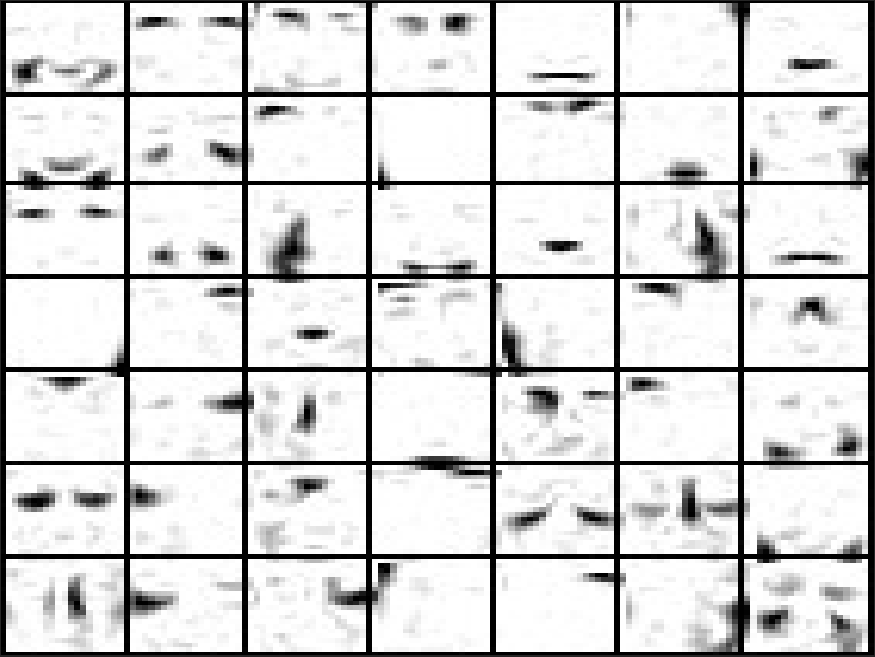}  
\hfill
  \includegraphics[width=0.47\linewidth]{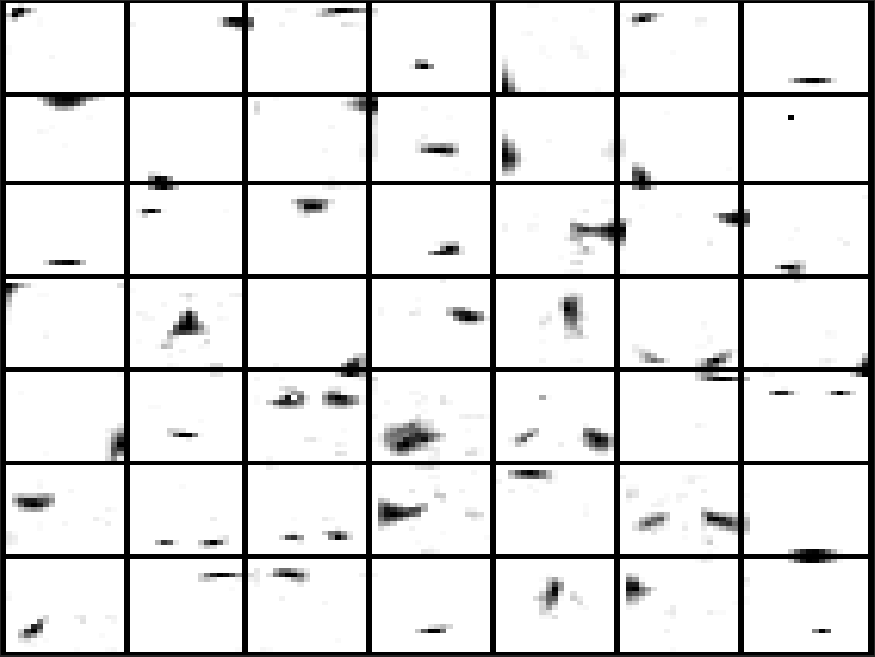} 
  \vspace{0.25cm} \\ 
\includegraphics[width=0.47\linewidth]{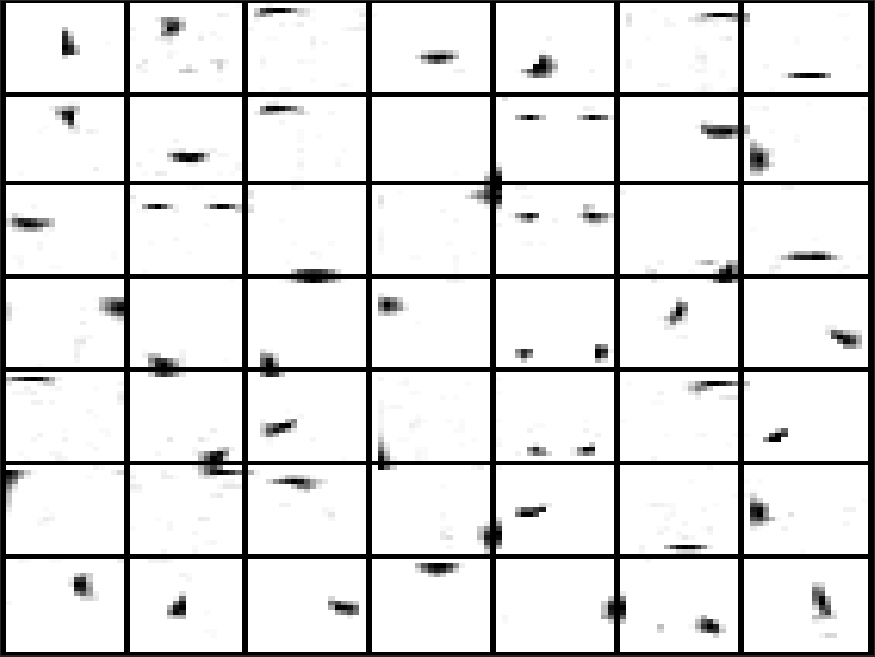}  
\hfill
 \includegraphics[width=0.47\linewidth]{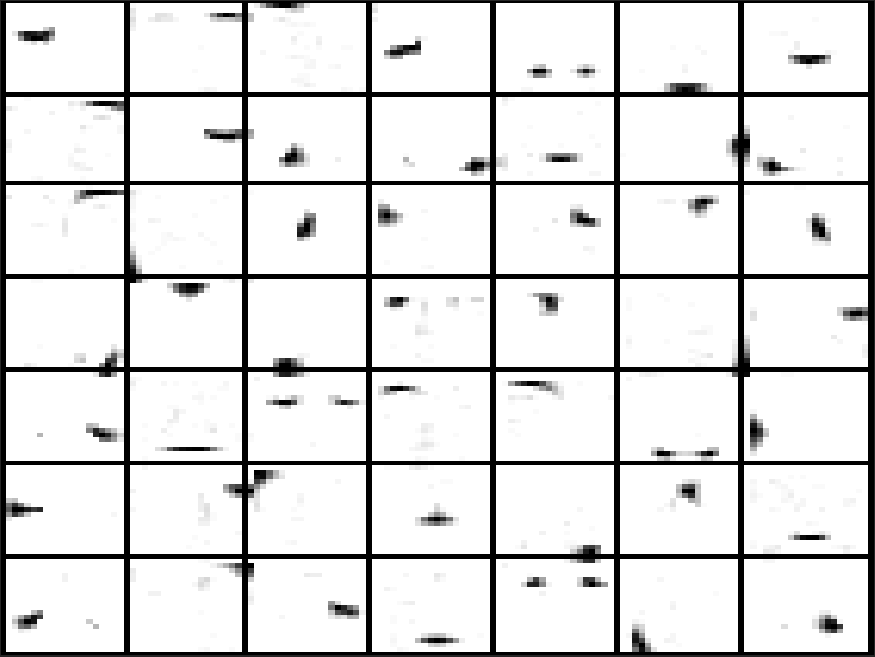} 
\caption{ Basis elements obtained with 
NeNMF (top left), 
PSNMF with sparsity 0.85 (top right),
$\ell_1$ A-HALS with sparsity 0.85 (bottom left), 
cPSNF with sparsity 0.85 (bottom right). \label{figCBCLall}}
\end{figure*}

\section{Deep Network Pruning with GSP: Experiments, settings and hyperparameters} \label{appDNNexp}

\subsection{Experiments on CIFAR-10}
We use the CIFAR-10 dataset \citep{krizhevsky2009learning} to train and test the VGG16, Resnet-56 and Resnet-110 models for our experiments. The CIFAR-10 dataset was accessed through the dataset API of the torchvision package of PyTorch. We performed the standard preprocessing on the data which included horizontal flip, random crop and normalization on the training set. With the CIFAR-10 dataset we perform two different types of experiments. First, we perform an experiment with layerwise induced GSP integrated with the training phase. We also perform single shot pruning with a single projection of the model weights followed by the finetuning phase in section~\ref{singleshot}. 

For the experiments with intermittent projections during the training phase, we project the weights of the VGG16 model using GSP with sparsity level $s$, perform a forward pass on the projected weights and finally update the model parameters using backpropagation every $150$ iterations for $200$ epochs, starting from epoch $40$. We reduce the learning rate by a factor of $0.1$ at the milestone epochs of $80, 120$ and $160$. Next, we set the $s$ fraction of the lowest parameters of the model to zero. At this point the model is sparse with a layerwise \textit{hoyer sparsity} of $s$. However, since we project intermittently and with $hoyer-sparsity$ being a differentiable approximation to the $\ell_0$ norm, we then prune the surviving weights that are close to zero or zero, keeping the largest $1-\tilde{s}$ fraction of the parameters, where $\tilde{s}$ is the final sparsity of the model. We fix these pruned parameters (do not train them) with a mask. Finally, we finetune the surviving parameters for 200 epochs with a learning rate of $0.01$ and dropping the rate by $0.1$ in the same milestones as the sparsity inducing run. In the case of single shot GSP, we take a pretrained model, make a single projection of the layers with a sparsity $s$ and then prune and finetune the model with similar parameters as above.

\subsection{Experiments on ImageNet}
In these set of experiments, we use the ImageNet dataset \citep{ILSVRC15} to train and test the Resnet-50 model for our experiments. ImageNet is augmented by normalizing per channel, selecting a patch with a random aspect ratio between $3/4$ and $4/3$ and a random scale between $8\%$ to $100\%$, cropping to 224x224, and randomly flipping horizontally.

For the experiments with induced-GSP, we project the weights of the ResNet50 model in a similar technique to the experiments performed with the CIFAR-10 dataset. We first project the layers of the model using GSP with $s=0.80$, perform a forward pass on the projected weights and finally update the model parameters using backpropagation every $500$ iterations for $120$ epochs. We start the projection of the model from epoch $40$ and keep projecting every $500$ iterations till the final epoch. In both the cases of CIFAR-10 and ImageNet we choose the iteration interval of projection in such a way so that there are 3 projections per epoch. We reduce the learning rate by a factor of $0.1$ at the milestone epochs of $70$ and $100$. Next, we set the $s$ fraction of the lowest parameters of the model to zero. We next prune the surviving weights that are close to zero or zero, keeping the largest $1-\tilde{s}$ fraction of the parameters, where $\tilde{s}$ is the final sparsity of the model. Finally, we finetune the surviving parameters for 140 epochs with a learning rate of $0.001$ and dropping the rate by $0.1$ in the same milestones as the inducing-GSP run.

\end{appendices}

% \bibliography{bibliography}

% \end{document}

\end{document}